\documentclass[preprint,12pt]{clear2025}

\usepackage{amsmath,amsfonts,bm}
\usepackage{pifont}

\newcommand{\indep}{\,\perp\!\!\!\!\!\!\perp\,} 

\usepackage{amssymb}
\usepackage{mathtools}
\usepackage{adjustbox}
\usepackage{rotating}

\newtheorem{hypothesis}{Hypothesis}[section]

\usepackage{mathrsfs}

\RequirePackage{algorithm}
\RequirePackage{algorithmic}

\def\eqref#1{equation~\ref{#1}}

\def\1{\bm{1}}

\def\rvb{{\mathbf{b}}}

\def\rvx{{\mathbf{x}}}

\def\rvz{{\mathbf{z}}}

\def\rmA{{\mathbf{A}}}
\def\rmB{{\mathbf{B}}}

\def\rmH{{\mathbf{H}}}
\def\rmI{{\mathbf{I}}}

\def\rmM{{\mathbf{M}}}

\def\va{{\bm{a}}}
\def\vb{{\bm{b}}}
\def\vc{{\bm{c}}}

\DeclareMathAlphabet{\mathsfit}{\encodingdefault}{\sfdefault}{m}{sl}
\SetMathAlphabet{\mathsfit}{bold}{\encodingdefault}{\sfdefault}{bx}{n}

\def\gA{{\mathcal{A}}}
\def\gB{{\mathcal{B}}}
\def\gC{{\mathcal{C}}}

\def\gN{{\mathcal{N}}}

\def\gS{{\mathcal{S}}}
\def\gT{{\mathcal{T}}}
\def\gU{{\mathcal{U}}}
\def\gV{{\mathcal{V}}}
\def\gW{{\mathcal{W}}}

\DeclareMathOperator*{\Esample}{\widehat{\mathbb{E}}}

\DeclareMathOperator*{\E}{\mathbb{E}}

\newcommand{\Var}{\mathbb{V}}

\newcommand{\pa}{\mathcal{P}}
\newcommand{\graph}{\mathscr{G}}

\usepackage{marginnote}
\usepackage{booktabs}						
\usepackage{multirow}						
\usepackage{amsfonts}						
\usepackage{graphicx}						
\usepackage{duckuments}						
\usepackage{lipsum}
\usepackage{wrapfig}
\usepackage{enumitem}
\usepackage[rightcaption]{sidecap}
\sidecaptionvpos{figure}{c}

\usepackage{lipsum}

\usepackage{caption}

\title[Generating CTIGs for Counterfactual Validation of TLP]{Generating Causal Temporal Interaction Graphs for \\Counterfactual Validation of Temporal Link Prediction}
\usepackage{times}

\clearauthor{%
 \Name{Aniq Ur Rahman} \&
 \Name{Justin P. Coon}  \Email{ \{aniq.rahman, justin.coon \}@eng.ox.ac.uk}\\
 \addr University of Oxford
}

\begin{document}

\maketitle

\begin{abstract}%
Temporal link prediction (TLP) models are commonly evaluated based on predictive accuracy, yet such evaluations do not assess whether these models capture the causal mechanisms that govern temporal interactions. In this work, we propose a framework for counterfactual validation of TLP models by generating causal temporal interaction graphs (CTIGs) with known ground-truth causal structure. We first introduce a structural equation model for continuous-time event sequences that supports both excitatory and inhibitory effects, and then extend this mechanism to temporal interaction graphs. To compare causal models, we propose a distance metric based on cross-model predictive error, and empirically validate the hypothesis that predictors trained on one causal model degrade when evaluated on sufficiently distant models. Finally, we instantiate counterfactual evaluation under (i) controlled causal shifts between generating models and (ii) timestamp shuffling as a stochastic distortion with measurable causal distance. Our framework provides a foundation for causality-aware benchmarking.
\end{abstract}


\begin{keywords}%
  temporal interaction graphs, temporal link prediction, counterfactual analysis
\end{keywords}

\thispagestyle{empty}

\section{Introduction}
Temporal Interaction Graphs (TIGs) are used to model real-world phenomena in which entities interact for a certain duration, which may be instantaneous. A social network serves as a prime example, where edges form between users interacting through messaging or commenting on each other's posts.

In graph learning, \textit{link prediction} refers to the task of predicting the existence of edges in a static graph after having observed other parts of the graph. In the dynamic setting, we have \textit{temporal link prediction} (TLP), wherein after observing a TIG up to a certain timestamp, a model makes  predictions on the existence of edges at specific timestamps in the future.

While the problem of TLP itself has gained popularity in the past years \citep{kumar2019predicting, rossi_temporal_2020, yu2023towards}, the question of whether these models capture the underlying causal mechanisms in the data remains largely unexplored~\citep{ur2025primer}. Recently,
\citet{rahman2025rethinking} have evaluated TLP models in a counterfactual setting, concluding that the models cannot distinguish between TIGs if the order in which the edges occur is changed, or the frequency with which they appear is changed. However, the real-world datasets are not certifiably causal \citep{pearl2009causality}; in particular, the causal influence of past temporal links on future interactions is generally unknown. To address this limitation, we propose a method for generating causal temporal interaction graphs (CTIGs), enabling counterfactual evaluation of TLP models using data with known causal structure.

Although research on causal temporal interaction graphs (CTIGs) is limited, insights can be drawn from the well-established adjacent field of \textit{causal event sequences} (CES) \citep{kim2011granger, noorbakhsh2022counterfactual, jalaldoust2022causal, cupperscausal} and from work on \textit{event prediction} \citep{zhao2021event}. Accordingly, we first develop a model to generate CESs and then naturally extend it to generate CTIGs, where each event corresponds to an edge event.
Once a CTIG is generated from a causal model, we are interested in quantifying how it differs from a CTIG generated by a different causal model, which leads to the following question:
\begin{quote}
    \textit{How do we measure the distance between two CTIG-generating causal models?}
\end{quote}
This question also lays the foundation for testing the following \textit{hypothesis}:
\begin{quote}
\textit{A predictive model trained on data generated by a causal model will perform worse on data generated by another causal model, provided that the distance between the two models is sufficiently large.}
\end{quote}

To test this hypothesis, we generate CTIGs from different causal models and query an oracle with access to the true causal parameters to make predictions. We then extend this framework to incorporate causal distortions and repeat the counterfactual experiments of \citet{rahman2025rethinking} using some well-known temporal link prediction (TLP) models.

\paragraph{Organisation}
In \textsection~\ref{sec:CES}, we introduce a structural equation model for generating causal event sequences and discuss the properties of the resulting causal model. Then, we compare our proposed model to the literature in \textsection~\ref{sec:relworks}. In \textsection~\ref{sec:compare}, we propose a distance measure to quantify the difference between two causal models. Next, in \textsection~\ref{sec:CTIG}, we extend the CES generation procedure to create causal temporal interaction graphs (CTIGs), beginning with the generation of node features and using them as the basis for defining the underlying causal graph. In \textsection~\ref{sec:CF}, we present the counterfactual setup, motivate it through an empirical study, and apply it to temporal link prediction (TLP) models. We propose a method to benchmark TLP models under controlled causal shifts and timestamp shuffling.
Finally, we conclude in \textsection~\ref{sec:conclude}. Due to page limitations, the preliminaries, extended related works, and proofs of theoretical properties of the causal model, are deferred to the appendix.

\paragraph{Notation}
We use $\emptyset$ for the empty set, $|\gA|$ for the cardinality of a collection $\gA$, and $|\gA^{(y)}|$ for the number of element $y$ occurrences in $\gA$.
We use $\mathbb{N}^+$ and $\mathbb{R}^+$ for sets of positive natural and positive real numbers, respectively.
For $n \in \mathbb{N}^+$, we let
 $[n] = \{1,2, \dots, n \}$, and 
for $a,b \in \mathbb{R}$ such that $a < b$, we let  $[a, b) = \{ c \in \mathbb{R} \mid a \leq c < b \}$. 
We use uppercase letters, for example $U$, for variables, lowercase letters, for example $u$, for values, and $\varnothing$  for the null value.
 The indicator function is represented by $\mathbb{I}\{ \cdot\} \in \{0,1\}$ the argument of which is a logical statement. 
Expectation and variance is denoted by $\E[\cdot]$ and $\Var[\cdot]$, respectively. We denote sample mean by $\Esample[\cdot]$.

\section{Causal Event Sequence}
\label{sec:CES}
Consider a set of events, $\{ \va, \vb, \vc \}$, that occur continuously over time. Each data point in this sequence can be represented as a pair, where one component identifies the event, and the other specifies the time of occurrence. For instance, given an observed event sequence: $(\va, t_1),$ $(\vc, t_2),$ $(\va, t_3), (\vb, t_4), (\vc, t_5), \dots, (\va, t_m),$ the following predictive questions can be posed: (1)~Can we predict when the next event will occur? (2)~If the timing of the next event, $t_{m+1}$, is known, can we predict which event will occur at that time? (3)~Can we estimate the probability that an event $ e \in \{ \va, \vb, \vc \} $ occurs within a specified future time interval? (4)~Can we predict the number of times an event $ e \in \{ \va, \vb, \vc \} $ will occur within a future time interval?

These questions highlight fundamental challenges in forecasting events that occur in continuous time.
Now suppose there exists a function $f$ that takes an event sequence as input and produces answers to these predictive queries. A natural question then arises:
\begin{quote}
    \textit{Is there reason to believe that the occurrence of past events influences which events will occur in the future, and when they will occur?} 
\end{quote}
If past events have no impact on future events, then providing them as input to $f$ is futile.
However, \citet{gong_active_2023} note that \textit{when data arrives sequentially, it is often assumed that temporal dependencies exist}. Therefore, we propose a causal mechanism to generate event sequences in continuous time so that the assumption holds. Moreover, we derive the following \textit{design principles} for a causal mechanism based on \citep{gong_active_2023}:
\begin{enumerate}[label=(\textit{D\arabic*})]
    \item Causal relationships take time to propagate, i.e., suppose an event $\va$ causes another event $\vb$, then, if $\va$ occurs at time $t$, then event $\vb$ might occur after some delay.
    \item to infer causal relationships among various events, one must consider both the temporal order in which they occur and the delay between them.
    \item the cause and effect events unfold on a single continuous timeline on a rolling basis.
\end{enumerate}

\paragraph{Proposed Model}
Consider a sequence of events with $n$ distinct event types that occur repeatedly over time. Each event type has an associated \textit{trigger event sequence}, which determines when it may be activated. We define $U_i(t) \in \{0,1\}$ as an indicator of whether an event of type $ i $ is triggered at time $ t $.

Given past events, an event of type $ i $ may or may not occur at time $ t $, which we denote as $ X_i(t) \in \{ 0,1 \} $. To capture historical influence, we define $ X_i'(t)\in \{0,1\}$ as the presence of event $ i $ within the time window $ (t - \bar{\tau}, t) $. Therefore, the causal model can be functionally expressed as\footnote{While the variables are written in uppercase, their values are written in lowercase.}:
\begin{align}
    x_i(t) = f\Big(  \{ \underbrace{x_j'(t) : \forall j \in \pa_i }_{\text{past events}} \}, \underbrace{ u_i(t) }_{\text{trigger}} \Big). 
    \label{sem_0}
\end{align}

For each event of type $i$, we define a \textit{structural equation model} (SEM) as follows\footnote{In contrast to \eqref{sem_0}, writing the input as $\pa_i(t)$ implicitly supplies the indicators $x_j'(t)$ for all $j \in \pa_i(t)$.}:
\begin{align}
    x_i(t) = f(\pa_i(t), u_i(t)) &= u_i(t) \cdot \hat{f}(\pa_i(t))  \nonumber \\
    &= u_i(t) \cdot \mathbb{I} \left\{ \sum_{j \in \pa_i(t)} \Theta_{i,j}  \geq 0\right\}
    = u_i(t) \cdot \mathbb{I} \left\{ \sum_{j \in \pa_i} \Theta_{i,j} \, x_j'(t) \geq 0\right\},
    \label{sem}
\end{align}
where $\pa_i(t) \subseteq \pa_i \subseteq [n]$ denotes the parents of $X_i(t)$ in a causal graph\footnote{In a causal graph, a directed edge from variable $A$ to variable $B$ indicates that $A$ is a cause of $B$.}, excluding $U_i(t)$, and $\Theta_{i,j} \in [-1,1]$ is a parameter which we refer to as the \textit{inter-event causal influence}. If $\Theta_{i,j} > 0$, we say that event of type $j$ \textit{excites} the event $i$, and if $\Theta_{i,j}<0$, we say that event $j$ \textit{inhibits} event $i$. The excitatory and inhibitory influence of one event on another was studied in \citep{kim2011granger} while modelling neural spiking activity. To avoid confusion, we refer to the set $\pa_i$ as the structural parents of $i$ defined as $\pa_i = \{ j : \Theta_{i,j} \neq 0, \forall j \in [n] \}$, and $\pa_i(t) \subseteq \pa_i$ as the active parents of $i$ at time $t$ defined as $\pa_i(t) = \{ j : x_j'(t) = 1, \forall j \in \pa_i \}$. 
\textit{With slight abuse of notation, when we say that $j$ is a parent of $i$, we mean that $X_j'(t)$ is a parent of $X_i(t)$.} Therefore, even when $i = j$, the causal graph remains acyclic, as $X_i'(t)$ and $X_i(t)$ represent distinct nodes.

\begin{SCfigure}[0.9][h!]
    \centering
    \includegraphics[width=0.5\linewidth]{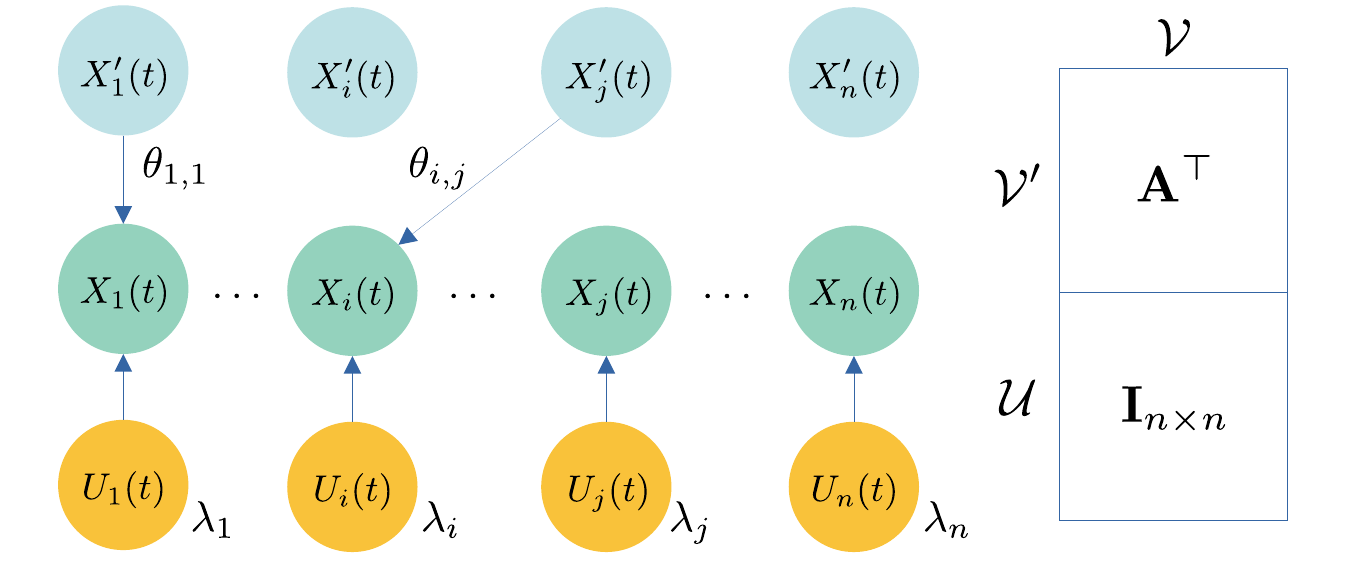}
    \caption{In this figure, we depict a sample causal graph $\graph$ with three types of nodes: event nodes $X_i$, past-event indicators $X_j'$, and trigger nodes $U_k$. The causal arrow from $X_j'(t)$ to $X_i(t)$ is annotated with the causal influence parameter $\Theta_{i,j}$. For the triggers, we indicate their generation rate $\lambda_k$, following a homogeneous Poisson process.}
    \label{fig:cgm2}
\end{SCfigure}

To specify the causal model, we describe how the causal graph, trigger event sequences, and causal influence parameters are generated:
\begin{itemize}[itemsep=2pt, topsep=2pt]
    \item For the causal graph, we focus on the bipartite graph from $\gV' = \{ X_i'(t) : i \in [n] \}$ to $\gV = \{ X_i(t) : i \in [n] \}$, whose adjacency matrix $\rmA$ is sampled from a parametric random graph model.\footnote{Standard random graph models such as the Erd\H{o}s--R\'enyi (ER) model \citep{erdos1961evolution}, the stochastic block model (SBM) \citep{abbe2018community}, or the Barab\'asi–Albert (BA) model \citep[\textsection~5]{barabasi2014network} can be used.}

    \item For each event of type $i$, its trigger event sequence $\Phi_i$ is generated by sampling a homogenous Poisson point process \citep[\textsection~2.4.1]{haenggi2012stochastic} with intensity $\lambda_i$ which can be sampled randomly from another distribution whose support is $\mathbb{R}^+$.
    \item Lastly, the \textit{inter-event causal influence} parameters $ \{ \Theta_{i,j} : \rmA_{i,j} = 1,  \forall i,j \in [n] \} $ are sampled from a distribution with support $[-1,1]$, filtering out $\{0\}$.
\end{itemize}

The procedure for generating a causal event sequence is detailed in Algorithm~\ref{algo:gen}.

\begin{algorithm}[H]
\caption{Causal Event Sequence Generation}
\label{algo:gen}
\begin{algorithmic}[1]
    \INPUT Causal model (causal graph and parameters), total time duration $T$
    \OUTPUT Event sequence $\gS$
    
    \STATE Initialize the event sequence $\gS$ as empty
    \STATE Generate trigger event sequence for each event type not exceeding $T$
    \STATE Collect and sort all trigger event times into a timeline $\gT$
    \FOR{each time $t$ in $\gT$}
        \STATE Determine which event is triggered at time $t$, say $i$
        \FOR{each event $j$ that causes $i$}
            \STATE Check if event $j$ occurred within a past time window of duration $\bar{\tau}$
        \ENDFOR
        \STATE Determine if the event $i$ occurs at time $t$ using the SEM in \eqref{sem}
        \IF{the event occurs}
            \STATE Add the event and its timestamp $(i, t)$ to the sequence $\gS$
        \ENDIF
    \ENDFOR
\end{algorithmic}
\end{algorithm}

\begin{SCfigure}[1][h!]
    \centering
    \includegraphics[width=0.45\linewidth]{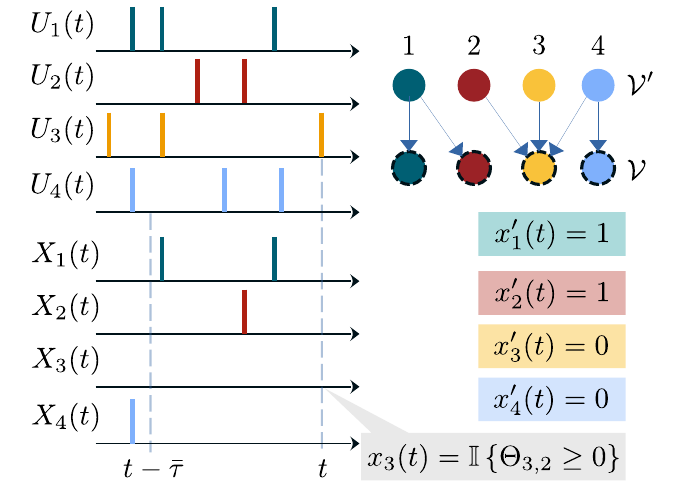}
    \caption{In this figure, we depict the working of Algorithm~\ref{algo:gen} visually for a causal graph with $4$ event nodes.
    At time $t$, $u_3(t)=1$ so we evaluate $x_3(t) = f\left(\pa_3(t), u_3(t)\right)$ and find that only $x_2'(t)=1$ from among its parents $\pa_3(t)$. This allows us to determine $x_3(t) = \mathbb{I}\left\{\Theta_{3,2} \geq 0 \right\}$. To determine the value of $x_j'(t)$ we only consider the time window $[t- \bar{\tau}, t)$. }  
    \label{fig:visual}
\end{SCfigure}

\paragraph{Properties of the proposed causal model} We now summarise several key theoretical properties of the proposed causal model, the details of which can be found in Appendix~\ref{app:properties}.
\begin{itemize}[itemsep=2pt, topsep=2pt]
    \item The causal model $( \graph, \Theta_{\graph} )$ is 
    \begin{itemize}[itemsep=2pt, topsep=2pt]
        \item \underline{semi-Markovian} for all $\rmA \in \{0,1\}^{n \times n}$, (Proposition~\ref{prop:semimarkov}).
        \item \underline{not Markovian} for all $\rmA \in \{0,1\}^{n \times n}$, (Proposition~\ref{prop:nonmarkov}).
        \item \underline{Markovian} if $\rmA = \rmI$, (Corollary~\ref{cor:markov}).
    \end{itemize}
    \item $X_i(t)$ is \underline{monotonic} relative to $X_j'(t)$ iff $\Theta_{i,j} \in [0,1]$, (Proposition~\ref{prop:mono}).
    \item $X_j'(t)$ is \underline{exogenous} relative to $X_i(t) \, \forall i,j \in [n]$,
    (Proposition~\ref{prop:exo}).
    \item If for some $j \in \pa_i$, $X_j'(t)$  exogenous relative to $X_i(t)$, and $X_i(t)$ is monotonic relative to $X_j'(t)$, then the probability of necessity and sufficiency $(\mathsf{PNS})$ is \underline{identifiable}, (Theorem~\ref{thm:identify}).
\end{itemize}

\section{Related Works}
\label{sec:relworks}
\citet{noorbakhsh2022counterfactual} introduced a method to model event sequences by selectively removing events from a point process based on a probabilistic rule. However, their approach does not explicitly define causal relationships between different event types. The authors define a multidimensional Hawkes process (MHP)~\citep{hawkes1971spectra}.

\citet{jalaldoust2022causal} propose a causal discovery algorithm to learn the Granger-causal network underlying an MHP. For evaluation, the authors also generate an event sequence by sampling an MHP, where $\lambda_i(t)$ denotes the intensity function of events in the $i^{\rm th}$ dimension. 
\begin{align}
   \textstyle  \lambda_i(t) = \bm{\mu}_i + \sum_{j \in \pa_i(t)} \int_{-\infty}^{t} \phi_{ij} (t - \tau) \, dU_j(\tau).
\end{align}
Here, $U_j$ denotes the counting process for the $j^{\rm th}$ dimension, and $U_j(t)$ corresponds to the number of events in the $j^{\rm th}$ dimension that have occurred before time $t$. Moreover, $\bm{\mu}_i$ is the underlying Poisson intensity, and $\phi_{ij}$ is the kernel function causally linking dimension $i$ to dimension $j$, i.e., $\pa_i(t) = \{ j : \phi_{ij} \neq 0 \}$. 

\citet{cupperscausal} introduce a causal model to generate event sequences wherein individual events of the cause \textit{trigger} events of the effect with dynamic delays. The causal graph $\graph$ consists of a set of source nodes $\gU$, and effect nodes $\gV$ with $\gU \cap \gV = \emptyset$. The model follows the assumption that an event of type $i \in \gU$ (cause) occurring at time $t$ \textit{causes} an event of type $j \in \gV$ (effect) at some time $t' \geq t$ with probability $\alpha_{i,j}$. It is further assumed that the events of type $i \in \gU$ are sampled from a homogenous Poisson process  with intensity $\lambda_i$, i.e., $\gS^i = \Phi(\lambda_i)$.
Some events of type $j \in \gV$ occur due to noise, described by another Poisson process with intensity $\lambda_j$, i.e., $\gW_j = \Phi(\lambda_j)$.

The events in $\gS^i$ result in delayed events in $\gS^{i,j}$ which is constructed\footnote{We have reframed \citep[eq.~2]{cupperscausal} and its description as \eqref{eq:summary}.} as follows\footnote{For practical purposes, $\xi$ is a very large finite value which works as a proxy for $\infty$.}:
\begin{align}
    \gS^{i,j} = \big\{ v \cdot (t + d) + (1-v)\cdot \xi : t \in \gS^i, d \sim   {\rm Exp}(\Theta_{i,j}), v \sim {\rm Bern}(\alpha_{i,j}) \big\} \setminus \{ \xi \}.
    \label{eq:summary}
\end{align}
Finally, all the events of type $j \in \gV$ is defined as the set $\gS^j$ (see \citep[eq.~3]{cupperscausal}):
\begin{align}
    \gS^j = \left( \bigcup_{i \in \pa_j} \gS^{i,j} \right) \cup \gW_j,
    \label{eq:fauxsem}
\end{align}
where different events in $\pa_j$ work independently to cause events of type $j \in \gV$, i.e., the structural equation groups the parent variables through a simple ``or'' operation.
Most prior works assume that causes only increase the likelihood of an effect occurring. However, real-world systems often include inhibitory effects, where an event can reduce the likelihood of another event happening. For instance, \citet{kim2011granger} explored cases where one event could either excite or suppress another. Our model incorporates both positive and negative influences, making it more flexible for diverse applications.  

Table~\ref{tab:limitations} highlights key limitations in existing literature and explains how our proposed causal model resolves them. Due to the space limitation, we have deferred it to Appendix~\ref{app:relworks}.

\section{Comparing Causal Models}
\label{sec:compare}

Having discussed how to generate a causal model and from it, an event sequence, we now turn to quantify the distance between two causal models. Specifically, we evaluate how a model performs on  CESs generated from another model, and vice versa. Finally, we report the average error across multiple CES realisations from both models. 

We denote a causal model by $\mathfrak{C}$, parameterised by the triple $(\Lambda, \Theta, \bar{\tau})$. Given a time horizon $T \in \mathbb{R}^+$, the model generates a random trigger event sequence $\Phi \in \mathbb{S}$ and an event sequence $\gS \in \mathbb{S}$ up to time $T$, which we write as $(\Phi, \gS) \sim \mathfrak{C}(T),$ with $\gS \subseteq \Phi.$

To facilitate comparisons between models, we introduce a consistent notation to distinguish their parameters and generated sequences. 
Consider two causal models, $\mathfrak{C}_A$ and $\mathfrak{C}_B$, with
$
(\Phi^A, \gS^A) \sim \mathfrak{C}_A(T),$ and $ (\Phi^B, \gS^B) \sim \mathfrak{C}_B(T).$
We then define the function $\tilde{f}_B(i,t,\gS^A,\Phi^A)$, which uses model $\mathfrak{C}_B$ to make predictions on the event sequence $\gS^A$, as\footnote{The superscript $A$ in $u_i^A(t)$ and $x_j'^{A}(t)$ indicates that these quantities are derived from $\Phi_i^A$ and $\gS^A$, respectively.}
\begin{align}
    \tilde{f}_{B}(i,t,\gS^A,\Phi^A)
    = u_i^A(t) \cdot \mathbb{I}\left\{
        \sum_{j \in \pa_i^B} \Theta_{i,j}^B \, x_j'^{A}(t) \geq 0
    \right\}.
\end{align}

We define the following distance to quantify the predictive error of $\mathfrak{C}_B$ on $\gS^A$:
\begin{align}
  d_{B}(\gS^A, \Phi^A) = \frac{1}{n} \sum_{i \in [n]} \E_{t \in \Phi_i^A} \left[   \left| \tilde{f}_{B}(i,t,\gS^A, \Phi^A) - \tilde{f}_{A}(i,t,\gS^A, \Phi^A) \right| \right].
\end{align}
To make the distance measure between the causal models $\mathfrak{C}_A$ and  $\mathfrak{C}_B$ symmetric, we define
\begin{align}
    d_{A,B}(\gS^A, \Phi^A, \gS^B, \Phi^B) = \sqrt{d_B(\gS^A, \Phi^A) \cdot d_A(\gS^B, \Phi^B)}.
\end{align}

In order to make the distance metric invariant of the sampled sequences, we calculate the mean over multiple samples, defining the final distance metric as
\begin{align}
    \bar{d}_{A,B} = \E_{\substack{ (\Phi^A, \gS^A) \sim \mathfrak{C}_A(T) \\ ( \Phi^B, \gS^B) \sim \mathfrak{C}_B(T)}}\Big[ d_{A,B}(\gS^A, \Phi^A, \gS^B, \Phi^B) \Big].
    \label{eq:distance}
\end{align}

To empirically assess the statistical stability of the proposed distance measure $\bar{d}_{A,B}$, we examine the decay of its variance as the effective sample size increases. Fig.~\ref{fig:var_decay}  provides empirical evidence that the estimator becomes more stable as more data are observed.



\begin{SCfigure}[1.3][h!]
    \centering
    \includegraphics[width=0.35\linewidth]{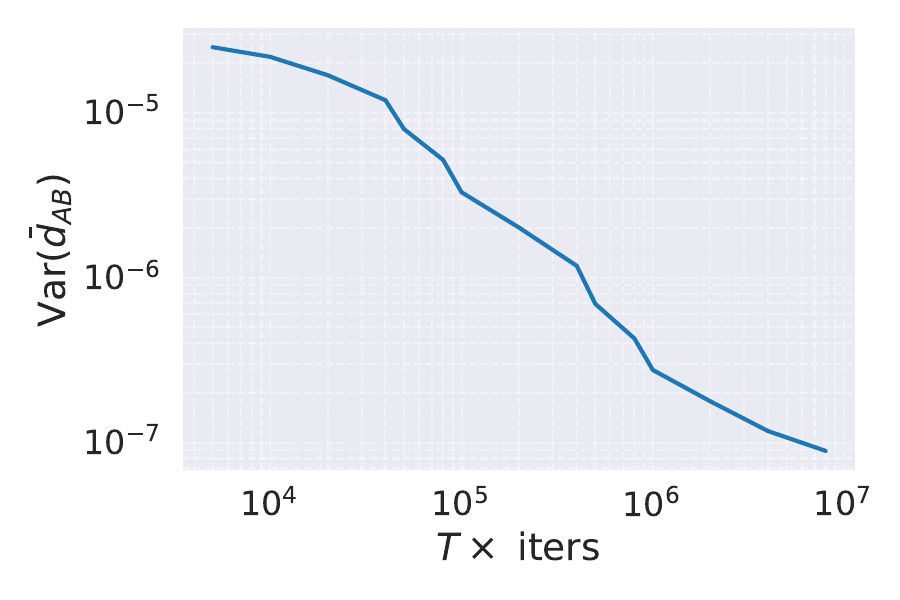}
    \caption{Empirical variance of the estimated distance $\bar{d}_{AB}$ between two independent causal models, plotted as a function of the effective sample size $T \times \text{iters}$.}
    \label{fig:var_decay}
    \vspace{-20pt}
\end{SCfigure}

\section{Causal Temporal Interaction Graphs}
\label{sec:CTIG}
In this section, we consider events as interactions between nodes, which together form a temporal interaction graph. Accordingly, instead of $n$ distinct events, we now have a graph with $n$ nodes, leading to a total of $E = \binom{n}{2}$ possible edges. 

We begin by generating node feature vectors $\rvx_a \in \mathbb{R}^r, \forall a \in [n]$, which in practice are obtained by randomly sampling from a distribution with support on $\mathbb{R}^r$, such as a multivariate normal distribution.
We then encode the relationship between two nodes $a$ and $b$ through  edge feature vector $\rvz_{(a,b)} \in \mathbb{R}^r$, computed from their respective node features $\rvx_a$ and $\rvx_b$ via a \textit{symmetric}\footnote{We assume undirected interactions; directed interactions would require $g(\cdot, \cdot)$ to be asymmetric.} function $\rvz_{(a,b)} = g(\rvx_a, \rvx_b)$ ensuring $\rvz_{(a,b)} = \rvz_{(b,a)}$. As a simple choice, we define 
\begin{align}
    g(\rvx_a, \rvx_b) = \rvx_a \odot \rvx_b.
\end{align}

Having defined edge features, we now model causal influence between edges as a function of these features.
Because causal influence between edge events is inherently directional (e.g., one interaction can influence another without reciprocity), we require this function, denoted by $h$ to be \textit{asymmetric}.
Let the features of two edges be $\rvz$ and $\rvz'$, then
\begin{align}
    h(\rvz, \rvz') = \sin\left( \nu_0 \, \tanh{ \left( \rvz^\top \rmB \, \rvz' \right) }   \right), \qquad \nu_0 \in \mathbb{R}, \quad \rmB \neq \rmB^\top.
\end{align}

The sine function bounds the output to $[-1,1]$, and $\nu_0$ adds a tunable scaling that increases the flexibility of $h$. To ensure asymmetry of $h$, we construct $\rmB$ as a skew-symmetric matrix, as follows: 
\begin{align}
    \hat{\rmB} = \begin{bmatrix}
        \hat{\rvb}_{i}
    \end{bmatrix}_{i=1}^{r}, \, \hat{\rvb}_{i} \overset{\text{\tiny i.i.d}}{\sim} \gN(\bm{0},\rmI_{r \times r}); \quad\rmB = \hat{\rmB} - \hat{\rmB}^\top.
\end{align}

The causal influence of edge $j$ on $i$ can be defined directly as $\Theta_{i,j} = h\left( \rvz_{i}, \rvz_{j} \right).$ However, this formulation is restrictive from a design perspective.
To further generalise the model, we introduce thresholding of the output of $h$ to control the \textit{sparsity} of the resulting causal graph. Moreover, we allow certain edges to be designated as \textit{non-causal}, meaning they neither exert causal influence on other edges nor are influenced by them, thereby behaving as spurious noise in the data. 

We first apply thresholding  to the output of the influence function $h$ in order to control sparsity. Specifically, for a threshold $\nu_1 \in (0,1)$, we define the function $\psi(x) = x \cdot \mathbb{I}\left\{ |x| \geq \nu_1  \right\}.$
We then define the causal influence matrix $\tilde{\Theta}$, prior to introducing non-causal edges, with entries
\begin{align}
    \tilde{\Theta}_{i, j} = \psi \left( h (\rvz_i, \rvz_j) \right).
\end{align}

In real-world scenarios, some edges correspond to spurious or unrelated events that have no causal relation with other edges. To model this, we designate $l<E$ edges as non-causal by sampling a subset of $l$ distinct edges uniformly at random from $[E]$. To this end, we construct a symmetric binary mask $\rmM_l \in \{0,1\}^{E \times E}$ with elements in the corresponding $l$ rows and columns set to $0$.
The causal influence matrix $\Theta$, is then obtained as:
\begin{align}
    \Theta = \tilde{\Theta} \odot \rmM_l.
\end{align}

Lastly, the causal graph $\rmA$ can be obtained through $\rmA_{i,j} = \mathbb{I}\left\{ \Theta_{i,j} \neq 0 \right\}.$

In Fig.~\ref{fig:causal_mat}, we illustrate the step-by-step construction of the causal influence matrix $\Theta$ for a CTIG model, instantiated on a graph with $5$ nodes.

\vspace{-10pt}
\begin{figure}[h!]
\centering
    \subfigure[$\rmH, \nu_0=100$]{\includegraphics[width=0.24\columnwidth]{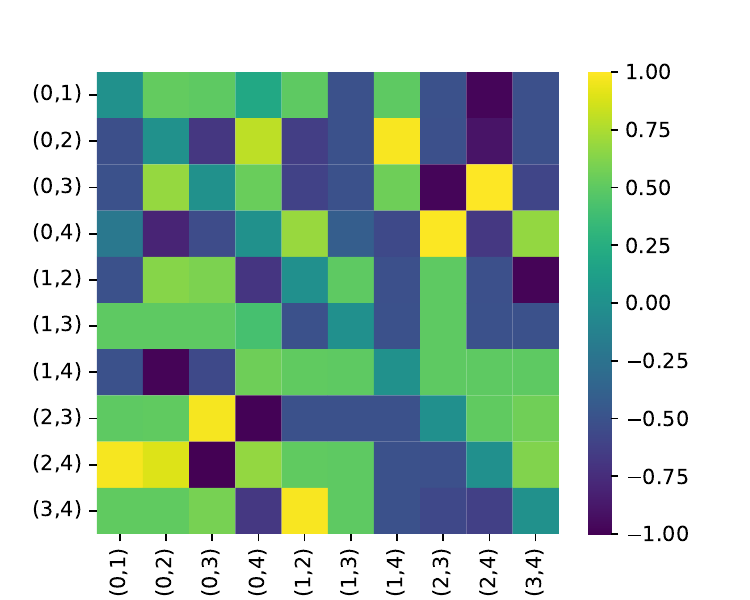}\label{fig:cmH}}
    \subfigure[ $\Tilde{\Theta}, \nu_1 = 0.55$]{\includegraphics[width=0.24\columnwidth]{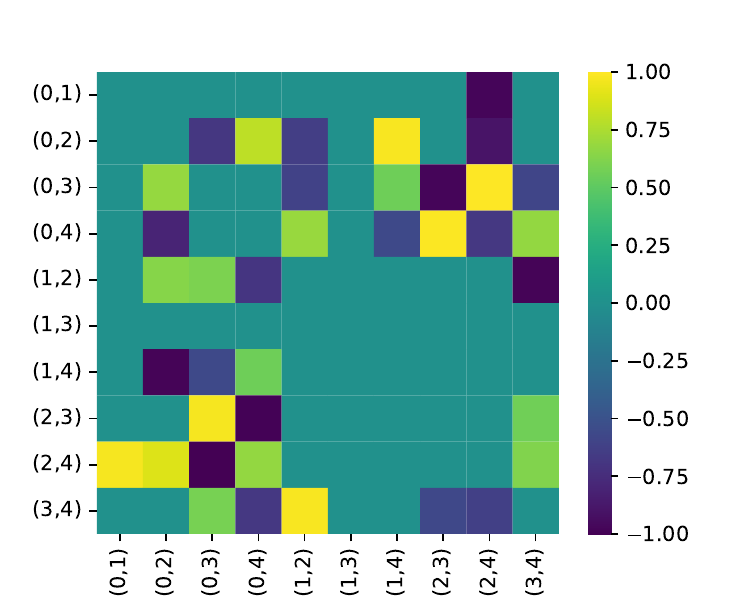}\label{fig:cmtT}} 
    \subfigure[ $\rmM_l, l=2$]{\includegraphics[width=0.24\columnwidth]{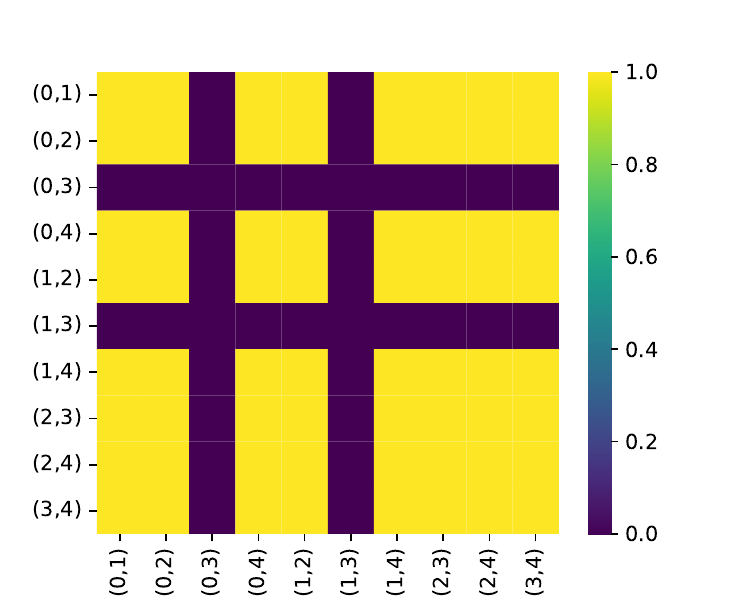}\label{fig:cmMl}} 
    \subfigure[ $\Theta$]{\includegraphics[width=0.24\columnwidth]{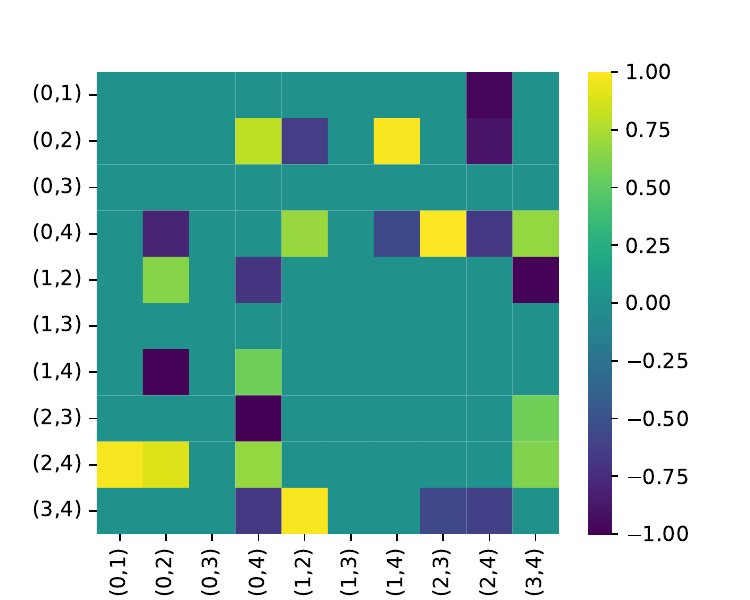}\label{fig:cmT}} 
    \caption{Construction of the CTIG causal parameters for a graph with $n=5$  and  $r=5$. }
    \label{fig:causal_mat}
    \vspace{-15pt}
\end{figure}

\section{Counterfactual Analysis}
\label{sec:CF}

\paragraph{Motivation}
Borrowing notation from \textsection~\ref{sec:compare}, we denote the
\textit{true} causal model by
$\mathfrak{C}_{0} = ( \Lambda_{0}, \Theta_{0} )$,
the \textit{estimated} model by
$\mathfrak{C}_{\star} = ( \Lambda_{\star}, \Theta_{\star} )$,
and the \textit{distorted} causal model by
$\mathfrak{C}_{\dagger} = ( \Lambda_{\dagger}, \Theta_{\dagger} )$.

The distance function of the estimated model is $d_{\star} : \mathbb{S} \times \mathbb{S} \rightarrow [0,1],$
which maps an event sequence $\gS \in \mathbb{S}$ and its trigger sequence $\Phi \in \mathbb{S}$ generated by a causal model $\mathfrak{C}$
to a scalar performance error.  Accordingly, the performance of the estimated model on a sequence generated by the true causal model is $1 -  d_\star(\gS^0, \Phi^0),$ while its performance on a sequence generated by the distorted model is $1 -  d_\star(\gS^\dagger, \Phi^\dagger).$

Comparing the performance achieved by the estimated model on the distorted sequence with the true sequence, we measure the gap $\Delta_{\star}(\gS^0, \Phi^0, \gS^\dagger, \Phi^\dagger)$ as
\begin{align}
   \underbrace{\Delta_{\star}(\gS^0, \Phi^0, \gS^\dagger, \Phi^\dagger)}_{\Delta_\star^{0,\dagger}} = \big( 1 -  d_\star(\gS^0, \Phi^0) \big) - \big( 1 -  d_\star(\gS^\dagger, \Phi^\dagger)  \big) \nonumber = \underbrace{d_\star(\gS^\dagger, \Phi^\dagger)}_{d_\star^\dagger} -\underbrace{  d_\star(\gS^0, \Phi^0)}_{d_\star^0}.
\end{align}

 We then define the mapping $\Delta_{\star}(\gS^0, \Phi^0, \gS^\dagger, \Phi^\dagger) \mapsto \bar{d}_{0,\dagger}, $ for realisations $(\Phi^0, \gS^0) \sim \mathfrak{C}_0(T),$ and  $ (\Phi^\dagger, \gS^\dagger)  \sim \mathfrak{C}_\dagger(T),$ where $\bar{d}_{0,\dagger} \in [0,1]$ is the distance between the true and distorted causal models (see \eqref{eq:distance}). This mapping can be used to analyse the correspondence between the performance gap and the distance between the causal models $\mathfrak{C}_{0}$ and $\mathfrak{C}_{\dagger}$. 

\begin{hypothesis}
\label{hypo:discrepancy}
For any causal model estimate $\mathfrak{C}_\star$ satisfying $d_\star^0 \in [0 , \delta_*)$,
there exists $\beta \in (0,1)$ such that
for all distorted models $\mathfrak{C}_{\dagger}$ with    $\bar{d}_{0,\dagger} > \beta$,
the performance gap satisfies $\Delta_{\star}^{0, \dagger} > 0.$ 
\end{hypothesis}

In other words, the estimated model performs better on the true sequence compared to the distorted sequence, if the distance of the distorted model $\mathfrak{C}_\dagger$ from the true causal model $\mathfrak{C}_0$ is large enough, and if $\mathfrak{C}_{\star}$ is sufficiently accurate, i.e., its error on the true sequence satisfies $0 \le d_\star^0 < \delta_*$. An empirical visualisation of this discussion in presented in Fig.~\ref{fig:cfPHI}.

\begin{figure}[h!]
\centering
    \subfigure[\footnotesize{$\Big( \bar{d}_{0,\dagger}, d_\star^\dagger - d_\star^0 \Big)$}]{\includegraphics[width=0.32\columnwidth]{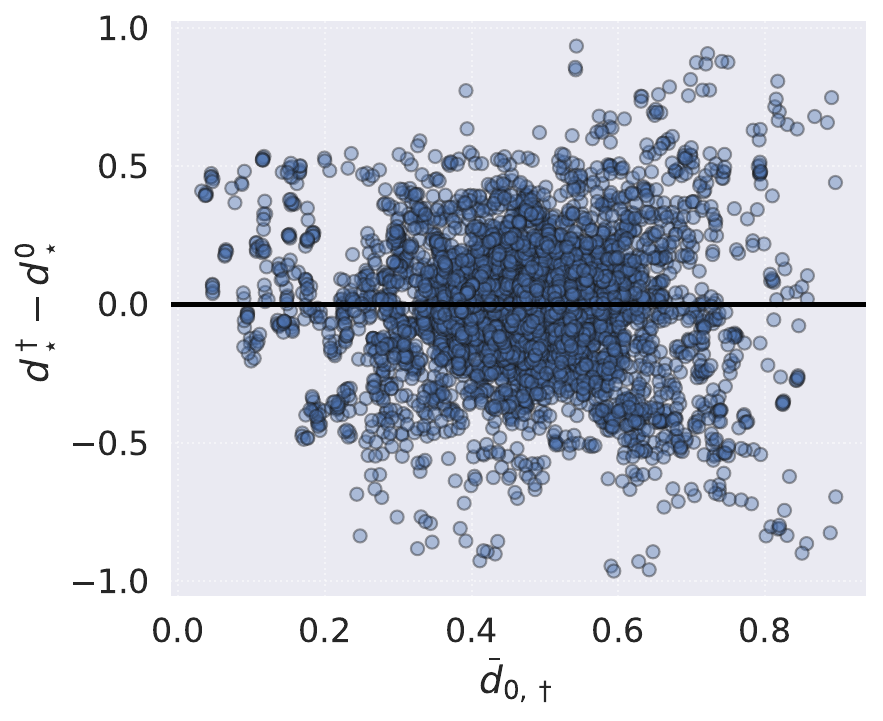}\label{fig:cfphi_1}}
    \subfigure[\footnotesize{$\Big( \bar{d}_{0,\dagger}, d_\star^\dagger - d_\star^0 \Big) : d_\star^0 < \delta_*$}]{\includegraphics[width=0.32\columnwidth]{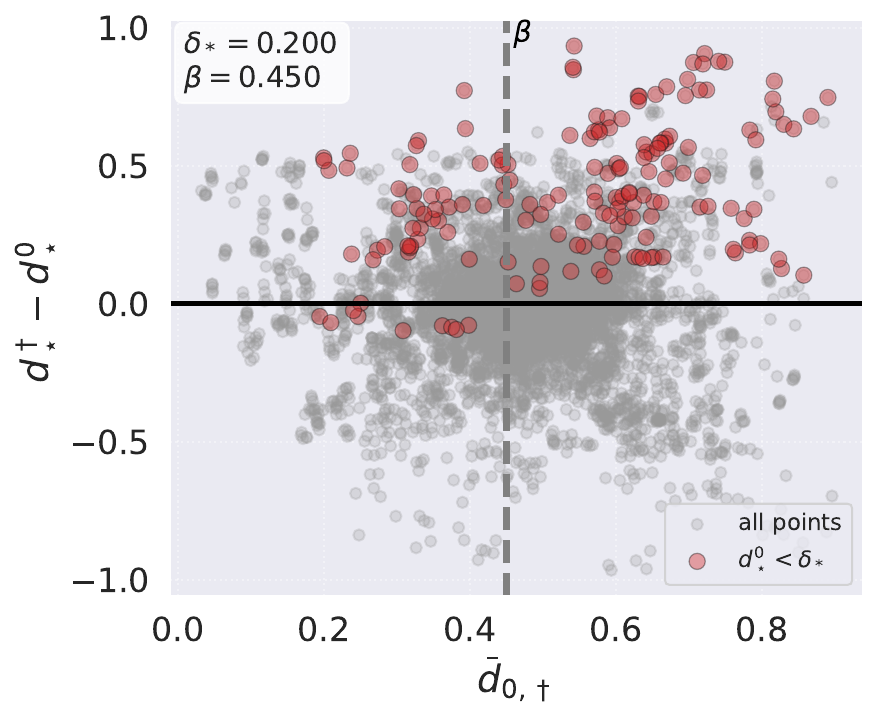}\label{fig:cfphi_2}} 
    \subfigure[\footnotesize{$\mathbb{P}\left( \Delta_\star^{0, \dagger} > 0 \mid d_\star^0 < 0.2, \bar{d}_{0, \dagger} > \beta \right)$}]{\includegraphics[width=0.32\columnwidth]{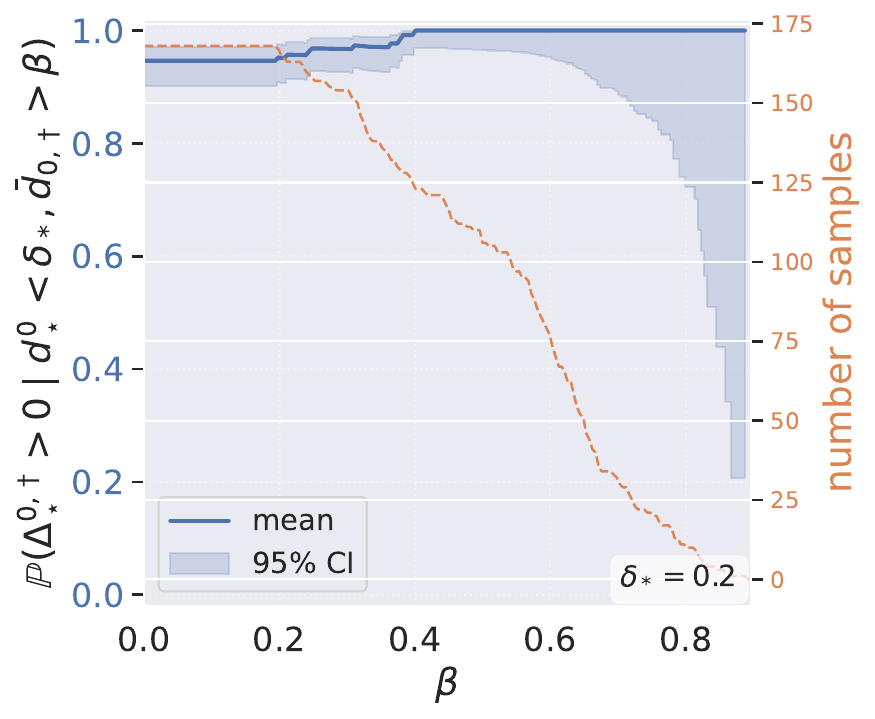}\label{fig:cfphi_3}} 
    \subfigure[\footnotesize{$\mathbb{P}\left( \Delta_\star^{0, \dagger} > 0 \mid d_\star^0 < \delta_* \right)$}]{\includegraphics[width=0.32\columnwidth]{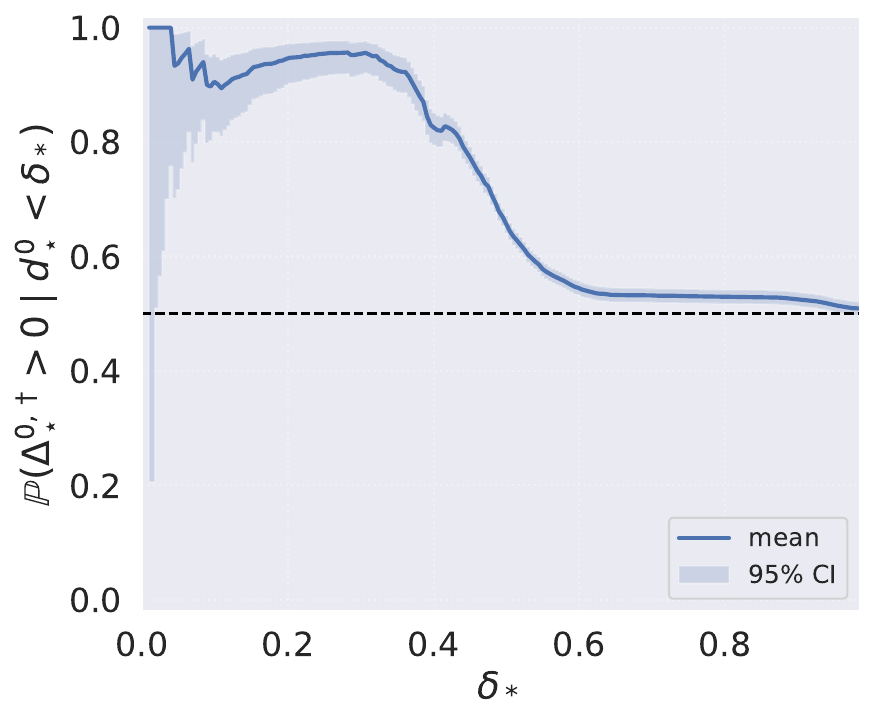}\label{fig:cfphi_4}} 
    \subfigure[\footnotesize{$\mathbb{P}\left( \Delta_\star^{0, \dagger} > 0 \mid d_\star^0 < \delta_*, \bar{d}_{0, \dagger} > \beta \right)$}]{\includegraphics[width=0.32\columnwidth]{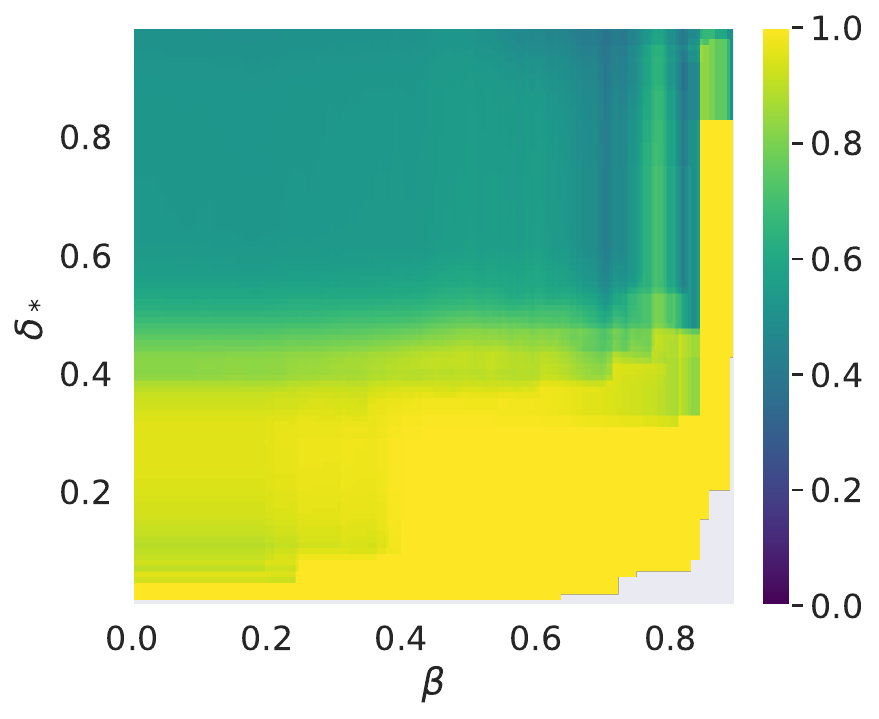}\label{fig:cfphi_5}} 
    \caption{Empirical evaluation of the performance gap under causal distortion. The panels visualize how the performance gap $\Delta_\star^{0, \dagger}$ and its associated probabilities vary with the causal distance $\bar{d}_{0, \dagger}$ and the accuracy threshold $\delta_*$, providing support for Hypothesis~\ref{hypo:discrepancy}. $n=7, T=10^3$.}
    \label{fig:cfPHI}
\end{figure}

In Fig.~\ref{fig:cfPHI}, we empirically examine the relationship between the performance gap
$\Delta_{\star}^{0,\dagger} = d_{\star}^{\dagger} - d_{\star}^{0}$
and the distance between the true and distorted causal models $\bar{d}_{0,\dagger}$.
Fig.~\ref{fig:cfphi_1} shows a scatter plot of the performance gap against $\bar{d}_{0,\dagger}$.
In Fig.~\ref{fig:cfphi_2}, we restrict attention to settings where the estimated model is sufficiently accurate, i.e., $d_\star^0 < \delta_*$ with $\delta_* = 0.2$, and highlight that for distortions satisfying $\bar{d}_{0,\dagger} > 0.45$, the performance gap is strictly positive.
Building on this observation, Fig.~\ref{fig:cfphi_3} sweeps over values of $\beta$ in the condition $\bar{d}_{0,\dagger} > \beta$ and shows that the probability of a positive performance gap increases with $\beta$.
In Fig.~\ref{fig:cfphi_4}, we instead vary $\delta_*$ and observe that the probability $\mathbb{P}(\Delta_\star^{0,\dagger} > 0 \mid d_\star^0 < \delta_*)$ decreases from near $1$ toward $0.5$ as the accuracy constraint is relaxed.
Finally, Fig.~\ref{fig:cfphi_5} presents a heat map over $(\beta, \delta_*)$, illustrating that for larger $\delta_*$, increasing $\beta$ alone is insufficient to guarantee an all-positive performance gap.

\paragraph{Application} 
Hypothesis~\ref{hypo:discrepancy} states that if the estimator
$\mathfrak{C}_{\star}$ is sufficiently accurate, i.e., $d_\star^0 \le \delta_*$, then there exists $\beta \in (0,1)$ such
that for every distorted causal model $\mathfrak{C}_{\dagger}$ with $\bar{d}_{0,\dagger} > \beta$, the corresponding performance gap satisfies $\Delta_{\star}^{0, \dagger} > 0$ with probability~$1$ under the joint sampling $(\gS^{0}, \Phi^{0}) \sim \mathfrak{C}_{0}(T)$ and $(\gS^{\dagger}, \Phi^{\dagger}) \sim \mathfrak{C}_{\dagger}(T)$. 
Formally, this can be written as
\begin{align}
\mathbb{P}\left(
    \Delta^{0, \dagger}_{\star} > 0 \mid  d_\star^0 < \delta_*, \bar{d}_{0,\dagger} > \beta
\right) = 1.
\label{eq:logic1}
\end{align}

In many practical settings, computing $\bar{d}_{0,\dagger}$ may be infeasible. For instance, one may only observe event sequences, while the parameters of the causal models that generate them remain unknown. Moreover, the true causal model $\mathfrak{C}_0$ may not be available, or the causally distorted sequences $\gS^\dagger$ may be produced by an alternative mechanism, such as corruption of the true sequence $\gS^0$ via an external procedure, rendering the direct computation of $d_\dagger^0$, and consequently $\bar{d}_{0,\dagger}$, impossible. Consequently, the statement in \eqref{eq:logic1} is not directly verifiable in practice.

However, we can still measure $d_\star^0$ as the prediction error of the estimated model on sequences generated by the true causal model. In the absence of $\bar{d}_{0,\dagger}$, we therefore use the probability
\begin{align*}
    p_\star^{0, \dagger}(\delta_*) \triangleq \mathbb{P} \left( \Delta^{0,\dagger}_{\star} > 0 \mid d_\star^0 < \delta_* \right)
\end{align*}
as a \textit{proxy} for quantifying the strength of the distortions in $\gS^\dagger$. Moreover, for a fixed $\delta_*$, this quantity enables comparison between predictors: given identical data, the predictor attaining a higher value of $p_\star^{0, \dagger}(\delta_*) $ is deemed to perform better.

\paragraph{Negative Sampling} 

In practice, trigger events are typically unobserved\footnote{TLP naturally fits within positive-unlabelled learning \citep{mansouri2025learning} setting.}; accordingly, we construct a set of negative events via \textit{negative sampling}.\footnote{For each $(i,t)\in \gS$, a negative event $(j,t)$ is generated with $j\in [E]\setminus \{ i \}$.}
We then define
\begin{align}
    \Phi \;=\; \gS \;\cup\; \left\{ (j,t) : (i,t)\in \gS,\; j \in [E]\setminus\{i\} \right\}.
    \label{eq:negsample}
\end{align}
Note that events in $\Phi \setminus \gS$ are not guaranteed to be infeasible for the causal model under consideration.\footnote{An event $(i,t)$ is said to be feasible for a causal model if $\hat{f}(\pa_i(t))=1$; see \eqref{sem}.} Consequently, $\Phi$ should not be interpreted as the set of true trigger events. In the \textit{transductive} setting, where the negative samples are only drawn uniformly from the observed set of edges so that $j \sim \mathrm{Unif} \left(\{k : (k,t) \in \gS \} \setminus \{ i \} \right).$

\paragraph{Experiment A}
We consider two causal event sequences: $\gS^0$ generated by a causal model $\mathfrak{C}_0(T)$, and $\gS^\dagger$ generated by a causal model $\mathfrak{C}_\dagger(T)$. We split\footnote{We denote the restriction of $\gS$ to the time interval $[a,b)$ by $\gS(a,b) = \{ (i,t) \in \gS : t \in [a,b) \}.$} the sequence $\gS^0$ chronologically into training and test sets as $\gS^{\mathrm{train}} = \gS^{0}(0,\tau_*)$ and $\gS^{\mathrm{test}}  = \gS^{0}(\tau_*, T)$ for some $\tau_* \in (0, T)$.
We then construct a counterfactual test set as $\bar{\gS}^{\mathrm{test}} = \gS^{\dagger}(\tau_*, T)$. Negative samples are generated in the transductive setting for both training and testing.

Let $Y$ denote a performance metric of a predictive model. Let $U$ represent the training condition, i.e., the dataset used for training, and let $X$ denote the test dataset. We use counterfactual-style notation and write $Y_{X = x}(U = u) = y$, which is interpreted as: \textit{a model trained under condition $u$ and evaluated on dataset $x$ achieves performance $y$}. For brevity, it is also written as $Y_x(u) = y$.

To instantiate this notation in our setting, we fix the training condition $u \equiv \gS^{\mathrm{train}}$ and consider two alternative test conditions, namely $x \equiv \gS^{\mathrm{test}}$ and $x' \equiv \bar{\gS}^{\mathrm{test}}$.
With $u$ fixed, the effect of data from another causal model is captured by the performance difference $Y_x(u) - Y_{x'}(u)$. We focus in particular on its sign, quantified by $\mathbb{P}\left( Y_x(u) - Y_{x'}(u) > 0 \right)$, which aligns with $\mathbb{P}\left( \Delta^{0,\dagger}_{\star} > 0 \right)$.

Fig.~\ref{fig:cf_causal_tlp} illustrates the empirical relationship between the causal shift induced by replacing $\gS^{\mathrm{test}}$ with $\bar{\gS}^{\mathrm{test}}$ and the resulting performance difference $Y_x(u)-Y_{x'}(u)$, aggregated over multiple independently generated model pairs and sequence realisations.\footnote{Each point corresponds to one realisation of $(\gS^0,\gS^\dagger)$ and one trained predictor; the horizontal axis represents the causal distance $\bar{d}_{0,\dagger}$, while the vertical axis denotes the performance gap $Y_x(u)-Y_{x'}(u)$.}
In Fig.~\ref{fig:cf_acc}, we report the performance gap achieved by the \texttt{Oracle} under counterfactual evaluation. Although the overall trend becomes increasingly positive as the causal distance grows, a small number of outliers persist. These deviations are expected, as negative sampling does not guarantee that all sampled negative events are infeasible, as discussed previously.

The remaining panels evaluate two TLP models\footnote{Extending the benchmark to additional TLP models is left to future work.}, \texttt{TGN}~\citep{rossi_temporal_2020} and \texttt{JODIE}~\citep{kumar2019predicting}. Fig.~\ref{fig:cf_ap_tlp} reports results using average precision (AP), while Fig.~\ref{fig:cf_auc_tlp} reports results using area under the ROC curve (AUC). The performance of \texttt{JODIE} remains largely flat and close to zero across increasing causal distances, indicating limited sensitivity to causal distortions. In contrast, \texttt{TGN} exhibits a clear positive trend, with the performance gap increasing as the causal distance grows, suggesting that it consistently performs better on the original test set than on its distorted counterpart. This behaviour supports the conclusion that \texttt{TGN} is causally sensitive under our counterfactual test, while \texttt{JODIE} does not.


\begin{figure}[h!]
\centering
    \subfigure[Accuracy]{\includegraphics[width=0.3\columnwidth]{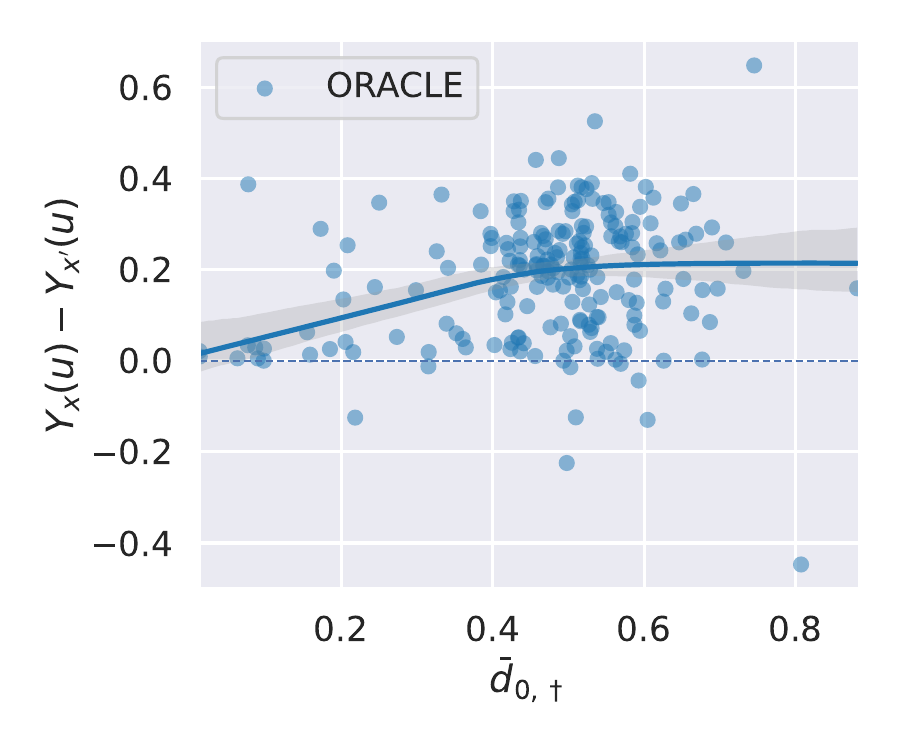}\label{fig:cf_acc}}
    \subfigure[ AP ]{\includegraphics[width=0.3\columnwidth]{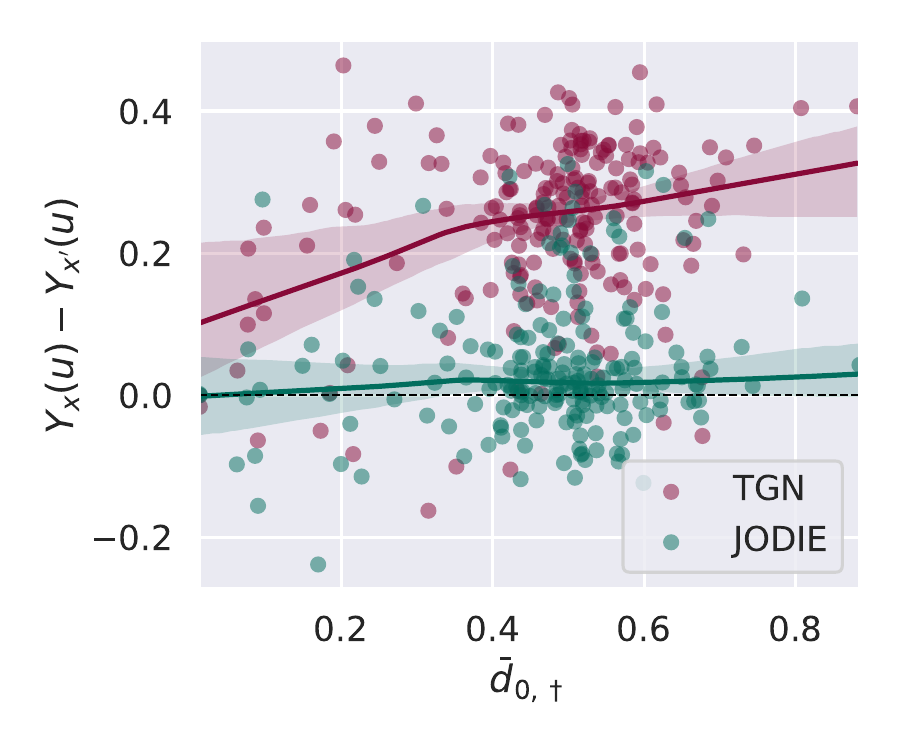}\label{fig:cf_ap_tlp}} 
    \subfigure[AUC]{\includegraphics[width=0.3\columnwidth]{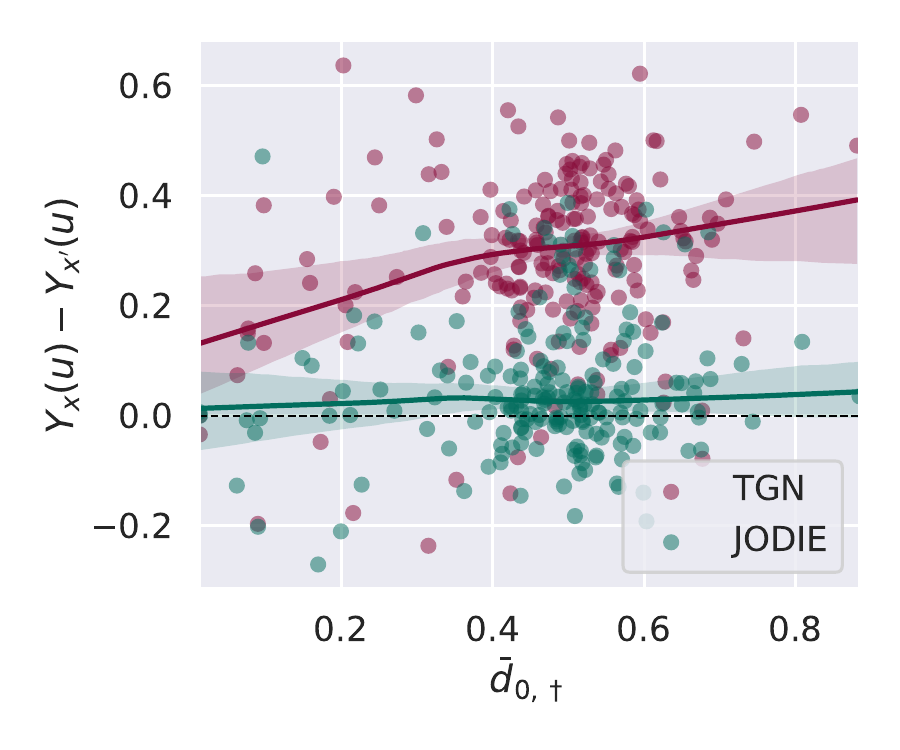}\label{fig:cf_auc_tlp}}
    \caption{Counterfactual evaluation of TLP models. Each panel shows a scatter plot of the performance gap, overlaid with a \texttt{LOWESS} smoothing curve (fraction $0.95$) to highlight the overall trend. } 
    \label{fig:cf_causal_tlp}
\end{figure}
\vspace{-10pt}

For completeness, we also report the performance $Y_x(u)$ of the models on the original test set $\gS^{\mathrm{test}}$. The \texttt{Oracle} achieves a mean accuracy of $0.693$ (95\% CI $[0.678,0.708]$), while \texttt{TGN} and \texttt{JODIE} achieve mean average precision  scores of $0.763$ (95\% CI $[0.751,0.775]$) and $0.563$ (95\% CI $[0.551,0.575]$), respectively.

\paragraph{Experiment B} We borrow the setup from the previous experiment. However, instead of constructing the counterfactual test set $\bar{\gS}^{\mathrm{test}}$ from an alternative causal model, we directly distort the original test sequence $\gS^{\mathrm{test}}$ by randomly shuffling event timestamps, thereby disrupting the temporal structure while preserving the set of observed interactions. We then repeat the same evaluation protocol as in Experiment~A, training models on $\gS^{\mathrm{train}}$ and evaluating them on both the original and shuffled versions of $\gS^{\mathrm{test}}$. 
A similar counterfactual setup based on timestamp shuffling was previously considered by \citet{rahman2025rethinking}, albeit without access to an underlying causal model. In contrast, our setting allows test sequences to be generated from a known causal model and equips us with a principled distance measure to quantify the extent to which shuffled sequences deviate from the original sequence.

Since shuffling is a stochastic procedure, we generate multiple shuffled realisations of the test data, denoted by $x_i'$ for the $i^{\text{th}}$ realisation. The causal distance between the original and shuffled sequences, as measured by the causal model $\mathfrak{C}_0$, has a mean value of $0.4322$ with a $95\%$ confidence interval of $[0.4302,\;0.4342]$. This lets us treat shuffling as a quantified causal distortion rather than an ad hoc stress test.

In Fig.~\ref{fig:shuffle_tlp}, we report violin plots of the achieved performance metrics $Y_X(u)$, where the horizontal axis corresponds to the test dataset $X$. A clear separation is observed between the metric values obtained on the original test data $x$ and those obtained on the shuffled test sets $x_1',\ldots,x_{10}'$, indicating a consistent degradation in performance under temporal distortion. A clear separation between $Y_x(u)$ and $Y_{x_i'}(u)$ is observed for the \texttt{Oracle} and \texttt{TGN} models, indicating sensitivity to temporal distortion, whereas no such separation is observed for \texttt{JODIE}. These results again support the conclusion that \texttt{TGN} passes the counterfactual test, while \texttt{JODIE} does not.
\begin{figure}[h!]
\centering
    \subfigure[Accuracy, \texttt{Oracle}]{\includegraphics[width=0.28\columnwidth]{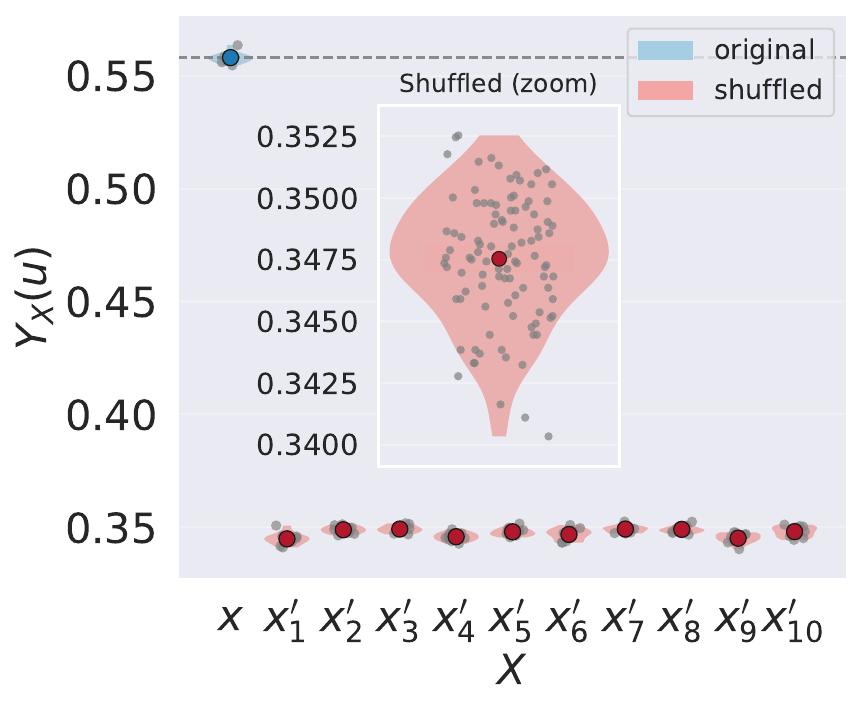}\label{fig:tgn}}
    \subfigure[AP, \texttt{TGN}]{\includegraphics[width=0.28\columnwidth]{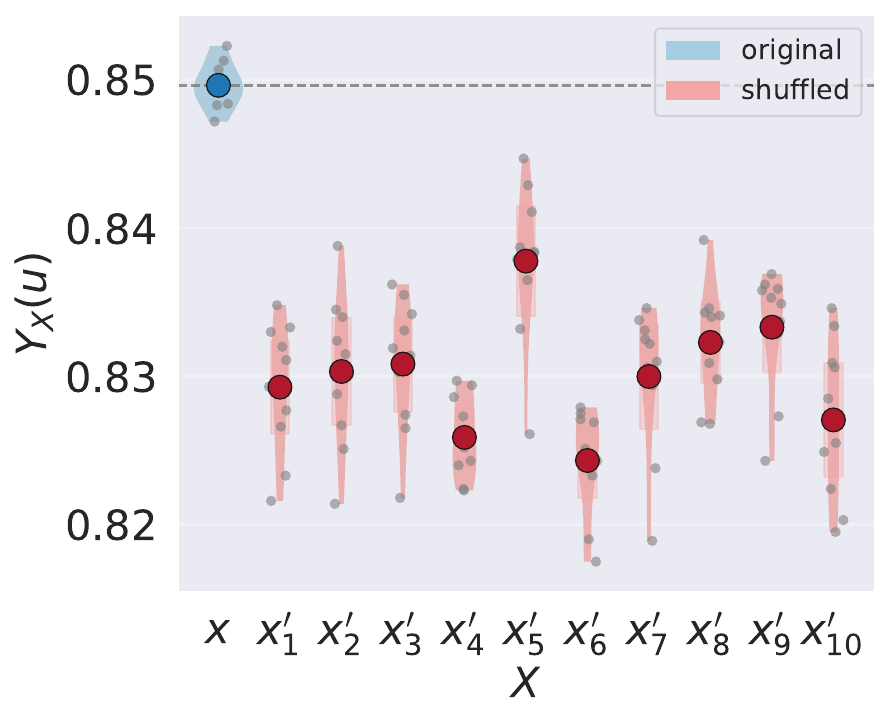}\label{fig:tgn}}
    \subfigure[ AP, \texttt{JODIE}]{\includegraphics[width=0.28\columnwidth]{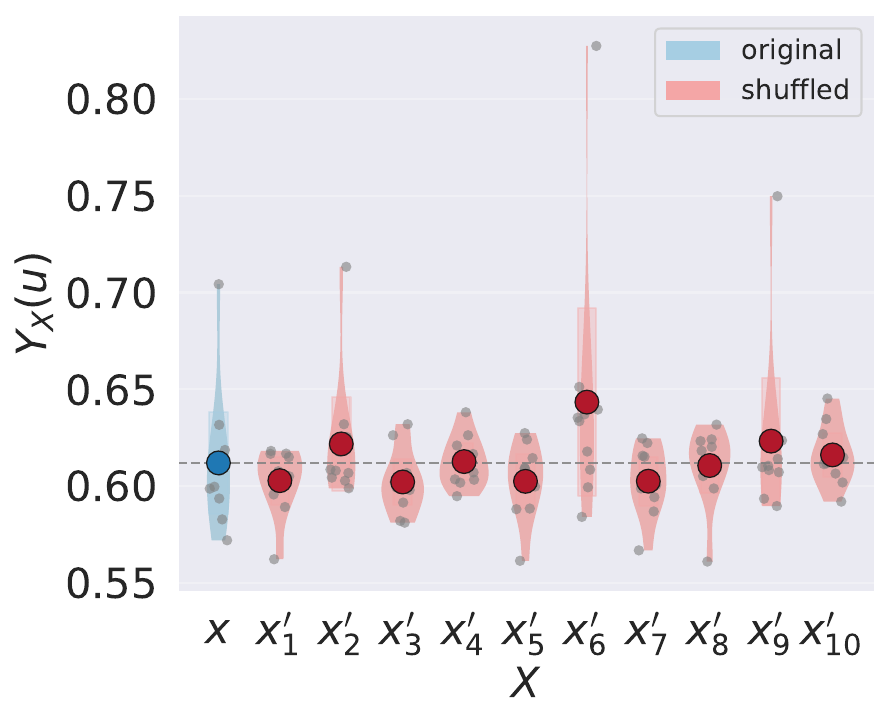}\label{fig:jodie}} 
    \caption{Violin plots of model performance under timestamp shuffling.}
    \label{fig:shuffle_tlp}
\end{figure}

\vspace{-25pt}
\section{Conclusion}
\label{sec:conclude}
In this work we presented an algorithm to generate causal event sequences, and a metric to measure the
distance between parametric models generating them. We then extended the mechanism to temporal
interaction graphs by proposing a construction in which node features induce a causal graph over edge
events, yielding causal temporal interaction graphs (CTIGs) with known ground-truth structure. Using
this setup, we instantiated counterfactual evaluation in two complementary ways: by testing models
across CTIGs generated from different causal models, and by applying timestamp shuffling as a
stochastic temporal distortion whose severity can be quantified by the proposed distance. In both
settings, we observed  degradation in predictive performance under causal distortion; in
particular, TGN exhibits clear sensitivity to the induced distortions whereas JODIE remains largely flat. 
Our evaluation framework offers a controlled testbed for developing and assessing TLP models that can exploit underlying causal structure rather than superficial correlations. Furthermore, the same methodology can be extended to other predictive domains where the performance of a model under causal shifts is of interest.




\bibliography{Causality}

@article{gong_active_2023,
  title = {Active Causal Structure Learning in Continuous Time},
  author = {Gong, Tianwei and Gerstenberg, Tobias and Mayrhofer, Ralf and Bramley, Neil R.},
  year = {2023},
  month = feb,
  journal = {Cognitive Psychology},
  volume = {140},
  pages = {101542},
  issn = {0010-0285},
  doi = {10.1016/j.cogpsych.2022.101542},
  urldate = {2024-10-21}
}

@book{haenggi2012stochastic,
  title={Stochastic geometry for wireless networks},
  author={Haenggi, Martin},
  year={2012},
  publisher={Cambridge University Press}
}

@article{kim2011granger,
  title={A Granger causality measure for point process models of ensemble neural spiking activity},
  author={Kim, Sanggyun and Putrino, David and Ghosh, Soumya and Brown, Emery N},
  journal={PLoS computational biology},
  volume={7},
  number={3},
  pages={e1001110},
  year={2011},
  publisher={Public Library of Science San Francisco, USA}
}

@book{pearl2009causality,
  title={Causality},
  author={Pearl, Judea},
  year={2009},
  publisher={Cambridge university press}
}

@article{noorbakhsh2022counterfactual,
  title={Counterfactual temporal point processes},
  author={Noorbakhsh, Kimia and Rodriguez, Manuel},
  journal={Advances in Neural Information Processing Systems},
  volume={35},
  pages={24810--24823},
  year={2022}
}

@inproceedings{cupperscausal,
  title={Causal Discovery from Event Sequences by Local Cause-Effect Attribution},
  author={C{\"u}ppers, Joscha and Xu, Sascha and Musa, Ahmed and Vreeken, Jilles},
  booktitle={The Thirty-eighth Annual Conference on Neural Information Processing Systems},
  year= {2024}
}

@article{barabasi2014network,
  title={Network science book},
  author={Barab{\'a}si, Albert-L{\'a}szl{\'o}},
  journal={Network Science},
  volume={625},
  year={2014}
}

@article{abbe2018community,
  title={Community detection and stochastic block models: recent developments},
  author={Abbe, Emmanuel},
  journal={Journal of Machine Learning Research},
  volume={18},
  number={177},
  pages={1--86},
  year={2018}
}

@article{hawkes1971spectra,
  title={Spectra of some self-exciting and mutually exciting point processes},
  author={Hawkes, Alan G},
  journal={Biometrika},
  volume={58},
  number={1},
  pages={83--90},
  year={1971},
  publisher={Oxford University Press}
}

@inproceedings{jalaldoust2022causal,
  title={Causal discovery in Hawkes processes by minimum description length},
  author={Jalaldoust, Amirkasra and Hlav{\'a}{\v{c}}kov{\'a}-Schindler, Kate{\v{r}}ina and Plant, Claudia},
  booktitle={Proceedings of the AAAI Conference on Artificial Intelligence},
  volume={36},
  number={6},
  pages={6978--6987},
  year={2022}
}

@article{zhao2021event,
  title={Event prediction in the big data era: A systematic survey},
  author={Zhao, Liang},
  journal={ACM Computing Surveys (CSUR)},
  volume={54},
  number={5},
  pages={1--37},
  year={2021},
  publisher={ACM New York, NY, USA}
}

@article{erdos1961evolution,
  title={On the evolution of random graphs},
  author={Erdos, Paul},
  journal={Bulletin of the Institute of International Statistics},
  volume={38},
  pages={343--347},
  year={1961}
}

@inproceedings{kumar2019predicting,
  title={Predicting dynamic embedding trajectory in temporal interaction networks},
  author={Kumar, Srijan and Zhang, Xikun and Leskovec, Jure},
  booktitle={Proceedings of the 25th ACM SIGKDD international conference on knowledge discovery \& data mining},
  pages={1269--1278},
  year={2019}
}

@article{ur2025primer,
  title={A Primer on Temporal Graph Learning},
  author={Rahman, Aniq Ur and Elhag, Ahmed A and Coon, Justin P},
  journal={ACM Computing Surveys},
  volume={58},
  number={5},
  pages={1--28},
  year={2025},
  publisher={ACM New York, NY}
}

@article{mansouri2025learning,
  title={Learning from positive and unlabeled examples-Finite size sample bounds},
  author={Mansouri, Farnam and Ben-David, Shai},
  journal={arXiv preprint arXiv:2507.07354},
  year={2025}
}

@article{yu2023towards,
  title={Towards better dynamic graph learning: New architecture and unified library},
  author={Yu, Le and Sun, Leilei and Du, Bowen and Lv, Weifeng},
  journal={Advances in Neural Information Processing Systems},
  volume={36},
  pages={67686--67700},
  year={2023}
}

@article{rossi_temporal_2020,
	title = {Temporal graph networks for deep learning on dynamic graphs},
	journal = {arXiv preprint arXiv:2006.10637},
	author = {Rossi, Emanuele and Chamberlain, Ben and Frasca, Fabrizio and Eynard, Davide and Monti, Federico and Bronstein, Michael},
	year = {2020},
}

@inproceedings{
rahman2025rethinking,
title={Rethinking Evaluation for Temporal Link Prediction through Counterfactual Analysis},
author={Aniq Ur Rahman and Alexander Modell and Justin Coon},
booktitle={I Can't Believe It's Not Better: Challenges in Applied Deep Learning},
year={2025},
url={https://openreview.net/forum?id=TKydQh6koc}
}

\appendix

\section{Preliminaries}
\label{sec:prelims}

\begin{definition}[\textit{Point Process}, {\citep[\textsection~2.2]{haenggi2012stochastic}}]
    A point process is a countable random collection of points that reside in some measure space, usually the Euclidean space $\mathbb{R}^d$. The associated $\sigma$-algebra consists of the Borel sets $\gB^d$, and the measure is the Lebesgue measure.
\end{definition}
In simple terms, a point process is a countable random set $\Phi = \left\{ x_1, x_2, \cdots \right\} \subseteq \mathbb{R}^d$ consisting of random variables $x_i \in \mathbb{R}^d$ as its elements. For a set $B \subseteq \mathbb{R}^d$, the cardinality of $\Phi \cap B$ is denoted as $N(B) \in \mathbb{N}$, and $N(\cdot)$ is called a counting measure.

\begin{definition}[\textit{1-D Poisson Point Process}, {\citep[\textsection~2.4.1]{haenggi2012stochastic}}]
The one-dimensional Poisson point process (PPP) with parameter $\lambda \in \mathbb{R}^+$ is a point process in $\mathbb{R}$ such that
\begin{enumerate}
    \item for every bounded interval $[a, b)$, the number of points in the interval $N([a,b)]) \in \mathbb{N}$ has a Poisson distribution with mean $\lambda(b-a)$, i.e., $P(N([a,b)])=k) = e^{-\lambda(b-a)} \frac{(\lambda(b-a))^k}{k!},$ and
    \item if $[a_1, b_1), [a_2, b_2), \cdots,  [a_m, b_m)$ are disjoint bounded intervals, then the number of points in those intervals $N([a_1, b_1)), \allowbreak \cdots,  N([a_m, b_m))$ are independent random variables.
\end{enumerate}
\label{def:PPP}
\end{definition}

We denote the samples drawn from a PPP with parameter $\lambda$ as $\Phi(\lambda) \subset \mathbb{R}$, and $(\Phi(\lambda), <)$ satisfies the properties of a \textit{total order}, i.e., $\Phi(\lambda) = \{  t_1, t_2, \cdots, t_m  \},$ such that $t_i < t_{i+1}, \forall i \in [m]$ for some $m \in \mathbb{N}^+.$

\begin{definition}[\textit{Causal model}, {\citep[\textsection~2.2, \textsection~7.1]{pearl2009causality}} ]
    Let $\gU = \{ U_1, \cdots, \allowbreak U_n \}$ be a set of background variables determined by factors outside the model, and let $\gV=\{ X_1, \cdots, X_n\}$ be a set of endogenous variables that are determined by variables in the model, i.e., variables in $\gU \cup \gV$. A causal model is a pair $\mathfrak{M} = ( \graph, \Theta_{\graph} )$ consisting of a causal graph\footnote{Causal graph is a directed graph where an edge from node $X$ to $Y$ indicates that $X$ causes $Y$.} $\graph$ and a set of parameters $\Theta_{\graph}$ compatible\footnote{The elements in $\Theta_{\graph}$ specify the functions $f_i: U_i \cup \pa_i \mapsto X_i, \forall i \in [n]$.} with $\graph$. The parameters $\Theta_{\graph}$ assign a function $x_i = f_i(\pa_i(t), u_i)$ to each variable $X_i \in \gV$ and a probability measure $\mathbb{P}(u_i)$ to each $u_i$, where $\pa_i$ are the parents of $X_i$ and where each $U_i \in \gU$ is a randomly  distributed according to $\mathbb{P}(u_i)$. 
    \label{def:CM}
\end{definition}

\begin{figure}[h!]
    \centering
    \includegraphics[width=0.35\linewidth]{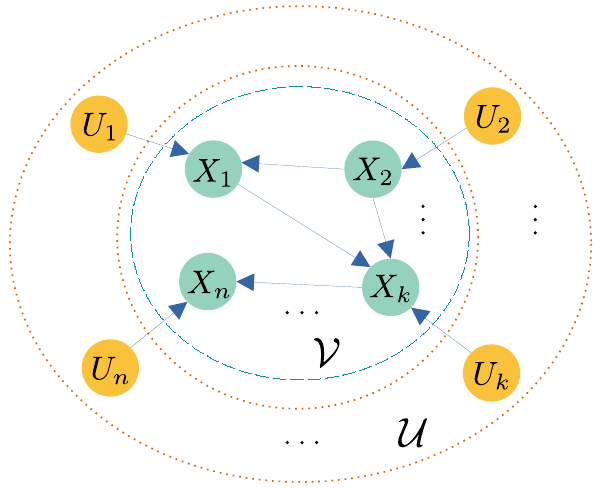}
    \caption{Causal model.}
    \label{fig:cgm}
\end{figure}

\begin{definition}[\textit{Monotonicity}, {\citep[\textsection~9.2]{pearl2009causality}} ]
    A variable $Y$ is said to be monotonic relative to variable $X$ in a causal model if and only if
    \begin{align*}
        \mathbb{I}\{ Y = 0 \mid do(X = 1) \} \cdot \mathbb{I}\{ Y = 1 \mid do(X = 0) \} = 0.
    \end{align*}
    \label{def:mono}
\end{definition}

\begin{definition}[\textit{Exogeneity}, {\citep[\textsection~7.4]{pearl2009causality}} ]
    In a causal model $( \graph, \Theta_{\graph} )$ a variable $X$ is exogenous relative to $Y$ if $X$ and $Y$ have no common ancestor in $\graph$.
    \label{def:exo}
\end{definition}

\section{Related Works}
\label{app:relworks}

The following table highlights key limitations in existing literature and explains how our proposed causal model resolves them.  
\begin{table*}[h!]
\small
\centering
\caption{Limitations in the literature and their resolution through our proposed causal model.}
\label{tab:limitations}
\begin{tabular}{p{0.5cm}  p{6cm} p{6cm}}
\toprule
\textit{} & \textit{Limitation} & \textit{Resolution} \\ 
\midrule
\multicolumn{3}{l}{\citet{noorbakhsh2022counterfactual}:}\\
(\textit{L1}) & The thinning procedure of a variable is independent of other variables.  & We define the acceptance probability as a function of other variables, through structural equations.\\  
\midrule
\multicolumn{3}{l}{\citet{cupperscausal}:}\\
(\textit{L2}) & An event of type $i$ at time $t$ can at most cause a single event of type $j$ at some time $t' >t$. & The influence of an event lasts for a finite duration within which it can cause an event $i$ multiple times.\\
(\textit{L3}) & An event is the result of a single cause, and a more realistic scenario of multiple causes working towards an effect is unexplored.  & In the SEM \eqref{sem} we can see that multiple events jointly cause an event. \\
\midrule 
\multicolumn{3}{l}{\citet{cupperscausal, noorbakhsh2022counterfactual, jalaldoust2022causal}:}\\
(\textit{L4}) & The cause-effect relationships are strictly positive, i.e., a cause can only result in the occurrence of the effect. & In this work, we look at both positive and negative cause-effect relationships, i.e., excitatory and inhibitory. \\
\bottomrule
\end{tabular}
\end{table*}

\section{Properties of the Causal Model}
\label{app:properties}

\begin{proposition}
    The causal model $( \graph, \Theta_{\graph} )$ is \underline{semi-Markovian} for all $\rmA \in \{0,1\}^{n \times n}$.
    \label{prop:semimarkov}
\end{proposition}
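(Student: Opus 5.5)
Semi-Markovian in Pearl's sense means the causal graph $\graph$ is acyclic, while the background variables are allowed to be dependent. Markovian would additionally require the background variables to be mutually independent. So the plan is to show acyclicity of $\graph$ for every adjacency matrix $\rmA$. The dependence question among exogenous terms does not need to be settled here; it is the content of Proposition~\ref{prop:nonmarkov}.

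\textbf{Step 1: identify the vertices and edges.} At a fixed time $t$, the graph $\graph$ has three vertex layers: $\gU(t)=\{U_i(t)\}$, $\gV'(t)=\{X_j'(t)\}$ and $\gV(t)=\{X_i(t)\}$. By the SEM in \eqref{sem}, every edge has one of two forms:
\begin{itemize}
\item $U_i(t)\to X_i(t)$, coming from the trigger factor $u_i(t)$;
\item $X_j'(t)\to X_i(t)$, present exactly when $\rmA_{i,j}=1$.
\end{itemize}
No structural equation has $X_j'(t)$ or $U_i(t)$ on its left-hand side in terms of other time-$t$ variables. Hence, within the time-$t$ graph, these are root nodes and $\graph$ is a three-layer DAG. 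The convention stated after \eqref{sem} matters here: $X_i'(t)$ and $X_i(t)$ are distinct nodes. So a diagonal entry $\rmA_{i,i}=1$ gives the edge $X_i'(t)\to X_i(t)$, not a self-loop. Every edge therefore points from $\gU(t)\cup\gV'(t)$ into $\gV(t)$, and $\gV(t)$ has no outgoing edges. A directed cycle would need an edge leaving $\gV(t)$, so none exists. This holds for every $\rmA\in\{0,1\}^{n\times n}$.

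\textbf{Step 2 (main obstacle): the cross-time dependence.} Here $X_j'(t)$ is really determined by earlier values $X_j(s)$ for $s\in(t-\bar\tau,t)$. I would unroll time and treat all variables $\{X_i(s),X_i'(s),U_i(s)\}$ over $s\le t$ as nodes. Every edge then either stays within one time slice from $\gV'\cup\gU$ to $\gV$, or goes from some $X_j(s)$ with $s<t$ to $X_j'(t)$.

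To rule out cycles in this unrolled graph, I would order the nodes by the pair (time stamp, layer), with $\gU,\gV'$ placed before $\gV$ within a slice. Every edge strictly increases this order, so the unrolled graph is acyclic.

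One subtlety is that the window is open, $(t-\bar\tau,t)$, which excludes $s=t$. This is exactly what prevents $X_j(t)\to X_j'(t)$, and hence prevents the 2-cycle $X_j(t)\to X_j'(t)\to X_i(t)\to\cdots$ back into the same slice.

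\textbf{Step 3: conclusion.} Since $\graph$ is acyclic regardless of $\rmA$, and semi-Markovian places no independence requirement on the background variables, the model is semi-Markovian. The remaining question is whether the background variables (triggers together with past-event indicators, viewed as exogenous at time $t$) share hidden common causes. That bears only on the Markovian/non-Markovian distinction, which is deferred to the next proposition.
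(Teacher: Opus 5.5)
Your proof is correct, and your Step~1 is the paper's argument: $\graph \subset (\gU\times\gV)\cup(\gV'\times\gV)$, no edge enters $\gU$ or $\gV'$ and no edge leaves $\gV$. Hence no directed cycle can form for any $\rmA$, and acyclicity is all that semi-Markovian requires.

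Your Step~2 has no counterpart in the paper, and the proposition does not need it. The paper never unrolls time, because $\graph$ is defined slice-wise, so each $X_j'(t)$ is a root (background) node by construction. Your lexicographic (time, layer) ordering of the unrolled graph is nonetheless valid. It gives a slightly stronger result: acyclicity still holds when the dependence of $X_j'(t)$ on earlier $X_j(s)$ is made explicit. It also correctly identifies the open right endpoint of the window $(t-\bar\tau,t)$ as what rules out same-slice loops such as $X_i(t)\to X_i'(t)\to X_i(t)$ when $\rmA_{i,i}=1$.
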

\begin{proof}
    The causal graph $\graph \subset \{ \gU \times \gV \} \cup \{ \gV' \times \gV \}$ consists of  two unidrectional bipartite subgraphs, one from $\gU$ to $\gV$, and another from $\gV'$ to $\gV$. Starting from any node, there is no way of reaching nodes in $\gU$ or $\gV'$. Moreover, starting from a node in $\gV$ there is no way to return to it, as there are no edges directed away from the nodes in $\gV$. Therefore, it is impossible to form a directed cycle in $\graph$ allowing us to conclude that $\graph$ is \textit{acyclic}, and thus the causal model $( \graph, \Theta_{\graph} )$ is semi-Markovian \citep[\textsection~3.2]{pearl2009causality}.
\end{proof}

\begin{proposition}
    The causal model $( \graph, \Theta_{\graph} )$  is \underline{not Markovian} for all $\rmA \in \{0,1\}^{n \times n}$.
    \label{prop:nonmarkov}
\end{proposition}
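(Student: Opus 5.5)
Recall from \citep[\textsection~3.2]{pearl2009causality} that a semi-Markovian model is \emph{Markovian} exactly when its background (exogenous) variables are mutually independent, i.e., there are no hidden common causes or dependencies among the root nodes of $\graph$. Proposition~\ref{prop:semimarkov} already gives acyclicity. So it remains to exhibit a dependence among the root nodes of $\graph$, namely the trigger nodes $U_i(t)$ and the past-event indicators $X_j'(t)$ (both in $\gU \cup \gV'$, with no incoming arrows in $\graph$). Showing that this dependence exists for every $\rmA$ establishes the claim.

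The plan is to show that, for every $\rmA$ and every $i$, the past-event indicator $X_i'(t)$ is statistically dependent on the trigger process $\Phi_i$. By Algorithm~\ref{algo:gen}, an event of type $i$ can occur only at a trigger time, since $x_i(s) = u_i(s)\cdot \hat f(\pa_i(s))$. Hence $X_i'(t)=1$ requires $\Phi_i \cap (t-\bar\tau, t) \neq \emptyset$. Consequently $\mathbb{P}(X_i'(t)=1 \mid \Phi_i\cap(t-\bar\tau,t)=\emptyset) = 0$, while $\mathbb{P}(X_i'(t)=1)>0$. The latter holds because, with positive probability, a trigger of $i$ occurs in the window while no parent-event is active at that moment. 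In that case the sum in \eqref{sem} is empty, equals $0 \geq 0$, and the event fires. This argument uses only the Poisson structure (Definition~\ref{def:PPP}) and is independent of $\rmA$, including $\rmA$ with zero rows.

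Next we must link this to the graph. The variable $X_i'(t)$ is a summary of the same trigger process $\Phi_i$ (over the earlier window) that generates $U_i(\cdot)$, together with the past $X$'s. So $X_i'(t)$ and the trigger variables $U_i(s)$ for $s \in (t-\bar\tau,t)$ share an unmodelled common cause, the point process $\Phi_i$, and more generally the full history. This common cause is not represented in $\graph$. In semi-Markovian terms it is a latent confounder, drawn as a bidirected arc between root nodes, so the root variables are not jointly independent. I will also note the dependence among the $X_j'(t)$ themselves: whenever $j \in \pa_i$, the value of $X_i'(t)$ is a function of earlier $X_j'$ values, which again involve overlapping trigger realisations. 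This shows that the roots are correlated through an omitted history.

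The main obstacle is formal. The graph $\graph$ is defined at a single time slice $t$, while the dependence arises across time. I will handle this by taking the Markov condition literally: the joint distribution of the root variables $\{U_k(t)\}\cup\{X_j'(t)\}$ must factorise. I will then exhibit the concrete failure $\mathbb{P}(X_i'(t)=1, N_i((t-\bar\tau,t))=0) = 0 \neq \mathbb{P}(X_i'(t)=1)\,\mathbb{P}(N_i((t-\bar\tau,t))=0)$. Here $N_i$ counts the points of $\Phi_i$, which determines the trigger values in the window. Both factors on the right are positive for any $\lambda_i>0$ and $\bar\tau>0$, regardless of $\rmA$. This yields the claim for all $\rmA$.
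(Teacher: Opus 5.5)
\textbf{The dependence you exhibit is not among the roots of $\graph$.} You state the criterion correctly: the joint law of the root variables $\{U_k(t)\}\cup\{X_j'(t)\}$ must factorise. Your witness, however, pairs $X_i'(t)$ with $N_i((t-\bar\tau,t))$, and that count is not a node of $\graph$. The only trigger node for type $i$ in the slice-$t$ model is $U_i(t)=\mathbb{I}\{t\in\Phi_i\}$. By the independent-increments property in Definition~\ref{def:PPP}, $U_i(t)$ is independent of everything the processes do before $t$, and hence of $X_i'(t)$. This is exactly condition (C2) in the paper's proof, and it \emph{holds}. Your displayed inequality is true but does not violate the factorisation you set as the test. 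The ``latent confounder'' reading fails for the same reason. The past of $\Phi_i$, including the $U_i(s)$ for $s<t$, feeds only one root, $X_i'(t)$, so it is just that root's own exogenous randomness and induces no bidirected arc. A hidden variable makes a semi-Markovian model non-Markovian only if it feeds at least two model variables.

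\textbf{The statement you set out to prove is false.} ``For every $\rmA$'' contradicts Corollary~\ref{cor:markov}, which says the model \emph{is} Markovian for $\rmA=\rmI$. Indeed, for any diagonal $\rmA$ each $X_i'(t)$ is a function of $\Phi_i$ alone, the $\Phi_i$ are independent, and each $U_k(t)$ is independent of the past, so all roots are mutually independent. The proposition is meant as ``Markovianity fails in general, not for every $\rmA$''. The paper verifies (C1), that the $U_k(t)$ are independent, and (C2), that $X_i'(t)\indep U_j(t)$. It then locates the failure in (C3): $X_i'(t)$ and $X_j'(t)$ become dependent when their dependency sets overlap, e.g.\ $j\in\pa_i$ with $i\neq j$, so that the past of $\Phi_j$ is a genuine common cause of two roots.

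\textbf{How to repair the argument.} Your passing remark about $j\in\pa_i$ is the right mechanism and should be the whole proof. To make it rigorous you need an off-diagonal parent that actually affects $\hat f$. With all $\Theta_{i,j}>0$ the SEM collapses to $x_i=u_i$ by Corollary~\ref{cor:vanish}, and no dependence arises. A concrete configuration is $\pa_j=\emptyset$, $\pa_i=\{j\}$, $\Theta_{i,j}<0$:
\begin{itemize}
\item A $j$-event at $r\in(t-\bar\tau,t)$ blocks every $i$-trigger in $(r,t)$.
\item Whether an $i$-trigger before the first such $r$ can fire depends only on $\Phi_j\cap(-\infty,t-\bar\tau]$, which is independent of $X_j'(t)$.
\item Together these give $\mathbb{P}(X_i'(t)=1\mid X_j'(t)=1)<\mathbb{P}(X_i'(t)=1\mid X_j'(t)=0)$, which is the required violation of (C3).
\end{itemize}
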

\begin{proof}
    In the proposed model $( \graph, \Theta_{\graph} )$ the set $\gU \cup \gV'$ represents the background variables, while $\gV$ constitutes the endogenous variables. For a model to be Markovian, it should be semi-Markovian, and in addition the background variables should be \textit{independent} \citep[\textsection~3.2]{pearl2009causality}. Therefore, the semi-Markovian model $( \graph, \Theta_{\graph} )$ is also Markovian if
    \begin{enumerate}[label=(C\arabic*)]
        \item $U_i(t) \indep  U_j(t), \, \forall i, j \in [n], \, i \neq j$, 
        \item $X_i'(t) \indep U_j(t) , \, \forall i, j \in [n]$,
        \item $X_i'(t) \indep  X_j'(t), \, \forall i, j \in [n], \, i \neq j$. 
    \end{enumerate}

    \par 
    \textit{Verifying (C1):} We express $ U_i(t) \indep U_j(t) $ as $$ \mathbb{I}\{ t \in \Phi(\lambda_i) \}  \indep \mathbb{I}\{ t \in \Phi(\lambda_j) \}, $$ which is equivalent to verifying whether $ \Phi(\lambda_i) \indep \Phi(\lambda_j) $. Since the two PPPs are independent, (C1) holds \textit{true}. 

    \par 
    \textit{Verifying (C2):} We express $X_i'(t)$ and $U_j(t)$ in terms of the PPPs:
    \begin{align*}
        &\textstyle X_i'(t) = \mathbb{I}\left\{ \sum_{\bar{t} \in (t - \bar{\tau}, t)} X_i(\bar{t}) > 0 \right\} \\
        &= \mathbb{I}\left\{ \sum_{\bar{t} \in (t - \bar{\tau}, t)} f\left( \bigcup_{j \in \pa_i(\bar{t})} \mathbb{I}\{ |\Phi(\lambda_j) \cap (\bar{t}-\bar{\tau}, \bar{t})| > 0 \} , U_i(\bar{t} ) \right) > 0 \right\}.\\
        &\textstyle U_j(t) = \mathbb{I}\{ t \in \Phi(\lambda_j) \}  = \mathbb{I}\{\lim_{\epsilon \downarrow 0}| \Phi(\lambda_j) \cap [t, t+ \epsilon) | > 0\}.
    \end{align*}
    The value of $X_i'(t)$ depends upon $\Phi(\lambda_k) \cap (t - 2 \bar{\tau}, t), \, \forall k \in \pa_i$ and $\Phi(\lambda_i) \cap (t - \bar{\tau}, t)$. Similarly, the value of $U_j(t)$ depends upon $\Phi(\lambda_j) \cap [t, t+ \epsilon)$ for some $\epsilon \downarrow 0$. 
    Since there is no overlap between $(t - 2 \bar{\tau}, t)$ and $\lim_{\epsilon \downarrow 0} [t, t+ \epsilon)$, (C2) holds \textit{true}.

    \par 
    \textit{Verifying (C3):} From the previous paragraph, we can say that $X_i'(t)$ depends on $\Phi(\lambda_k) \cap (t - 2 \bar{\tau}, t), \, \forall k \in \pa_i(t)$ and $\Phi(\lambda_i) \cap (t - \bar{\tau}, t)$, and $X_j'(t)$ depends on $\Phi(\lambda_m) \cap (t - 2 \bar{\tau}, t), \, \forall m \in \pa_j(t)$ and $\Phi(\lambda_j) \cap (t - \bar{\tau}, t)$. 

    Intuitively, this means that each $X_i'(t)$ is influenced by the recent activity of its parent events and its own recent occurrences within their respective temporal windows. For $X_i'(t)$ and $X_j'(t)$ to be independent, their corresponding sets of influencing events must not overlap at any time $t$. In other words, their dependency domains must remain disjoint over time. 
    
    Therefore, $X_i'(t) \indep X_j'(t)$ only if
    \begin{align*}
        &\{ \pa_i(t) \cup \{ i : X_i'(t) = 1 \} \} \cap \{ \pa_j(t) \cup \{ j : X_j'(t) = 1 \} \} = \emptyset, \quad \forall i \neq j, \forall t \in \mathbb{R}^+ \\  
        &\implies \{ \pa_i \cup \{i\} \} \cap \{ \pa_j \cup \{j\} \} = \emptyset, \quad \forall i \neq j \\
        &\implies \pa_i \subseteq \{ i \}, \forall i \in [n]. 
    \end{align*}
    
    The implication follows under the assumption that, for some $t$, the time-varying parent sets $\pa_i(t)$ and $\pa_j(t)$ coincide with their structural counterparts $\pa_i$ and $\pa_j$, and that the indicators $X_i'(t)=1$ and $X_j'(t)=1$ can be simultaneously satisfied. This condition effectively requires that the sets of potential causal influencers  remain completely non-overlapping. 
    However, such disjointness is highly restrictive and can only occur for $\rmA = \rmI$. Therefore, we conclude that (C3) is \textit{not true} for all realizations of $\rmA$.

    Since (C3) does not always hold, we conclude that the causal model $( \graph, \Theta_{\graph} )$  is not Markovian for all $\rmA$.
\end{proof}

\begin{corollary}
    The causal model $( \graph, \Theta_{\graph} )$ is \underline{Markovian} if $\rmA = \rmI$.
    \label{cor:markov}
\end{corollary}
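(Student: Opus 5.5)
The plan is to reuse the structure of the proof of Proposition~\ref{prop:nonmarkov}. Proposition~\ref{prop:semimarkov} already makes the model semi-Markovian for every $\rmA$, so it remains only to check that the background variables $\gU \cup \gV'$ are mutually independent when $\rmA = \rmI$. That proof reduced this to conditions (C1)--(C3). It established (C1) and (C2) for arbitrary $\rmA$ without using any structure of the graph, so both carry over verbatim. The whole argument therefore reduces to verifying (C3), namely $X_i'(t) \indep X_j'(t)$ for $i \neq j$, under $\rmA = \rmI$.

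For (C3) I would make explicit the dependency domain of each $X_i'(t)$ when $\rmA = \rmI$. Then $\pa_i = \{i\}$, since $\Theta_{i,i} \neq 0$ and $\Theta_{i,j} = 0$ for $j \neq i$. Substituting into the SEM \eqref{sem} gives
\begin{align*}
x_i(\bar t) = u_i(\bar t)\cdot \mathbb{I}\left\{\Theta_{i,i}\, x_i'(\bar t) \geq 0\right\}.
\end{align*}
Consequently the whole history of event type $i$, and hence $X_i'(t)$, is a measurable function of $\Phi(\lambda_i)$ alone. This holds by induction over the ordered points of $\Phi(\lambda_i)$ in Algorithm~\ref{algo:gen}: each decision at a trigger time of $i$ depends only on earlier accepted events of type $i$. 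With this in hand, $X_i'(t)$ is $\sigma(\Phi(\lambda_i))$-measurable and $X_j'(t)$ is $\sigma(\Phi(\lambda_j))$-measurable. The trigger processes are independent PPPs, the fact already used in (C1), so $X_i'(t) \indep X_j'(t)$.

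To finish, I would check that the whole collection $\gU \cup \gV'$ is jointly independent, not merely pairwise. Group the variables by index: the variables $\{U_i(t), X_i'(t)\}$ are functions of $\Phi(\lambda_i)$, so groups with different indices are independent by mutual independence of the PPPs. Within a group, $U_i(t)$ depends on $\Phi(\lambda_i)\cap[t,t+\epsilon)$ and $X_i'(t)$ depends on $\Phi(\lambda_i)\cap(t-\bar\tau,t)$ (more precisely, on a bounded window of the past). These are disjoint sets, so the independent-increments property in Definition~\ref{def:PPP} gives independence, exactly as in (C2). Joint independence together with semi-Markovianity then yields Markovianity by \citep[\textsection~3.2]{pearl2009causality}.

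The main obstacle is the recursion in the measurability claim. $X_i'(t)$ depends on past $x_i$, which depend on earlier $x_i'$, and so on back in time, so the dependence is not confined to a window $(t-2\bar\tau,t)$ as the earlier proof loosely suggested. I would handle this by working with the full past of $\Phi(\lambda_i)$ on $[0,t)$, assuming a finite start time $0$ so that the induction over finitely many points on a bounded interval terminates. This route uses only the independence of distinct trigger processes and the disjointness of past and present increments, and never needs a bounded window.
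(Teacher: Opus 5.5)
Your proposal is correct and follows the paper's own (implicit) route. The paper reads the corollary off the proof of Proposition~\ref{prop:nonmarkov}: there (C1) and (C2) hold for every $\rmA$, and (C3) reduces to disjointness of the dependency domains of $X_i'(t)$ and $X_j'(t)$, which $\rmA = \rmI$ (so $\pa_i = \{i\}$) guarantees because each $X_i'(t)$ is then determined by the independent process $\Phi(\lambda_i)$ alone; your induction over the full past $[0,t)$ in place of the paper's loose window $(t-2\bar{\tau},t)$, and your check of joint rather than pairwise independence, tighten that argument without changing it.
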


\begin{proposition}
    $X_i(t)$ is \underline{monotonic} relative to $X_j'(t)$ iff $\Theta_{i,j} \in [0,1]$.
    \label{prop:mono}
\end{proposition}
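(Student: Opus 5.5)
We will work directly from Definition~\ref{def:mono} and the SEM in \eqref{sem}. The intervention $do(X_j'(t)=x)$ replaces the indicator $x_j'(t)$ by $x\in\{0,1\}$ and leaves every other input of $f$ untouched: the trigger $u_i(t)$ and the remaining indicators $x_k'(t)$ for $k\in\pa_i\setminus\{j\}$. Write $S=\sum_{k\in\pa_i\setminus\{j\}}\Theta_{i,k}\,x_k'(t)$ for the contribution of the other parents. Then
\begin{align*}
Y_1 &\triangleq X_i(t)\mid do(X_j'(t)=1) = u_i(t)\,\mathbb{I}\{S+\Theta_{i,j}\ge 0\},\\
Y_0 &\triangleq X_i(t)\mid do(X_j'(t)=0) = u_i(t)\,\mathbb{I}\{S\ge 0\}.
\end{align*}
If $j\notin\pa_i$, so that $\Theta_{i,j}=0$, the two expressions coincide and monotonicity is trivial. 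Monotonicity requires that $Y_1=0$ and $Y_0=1$ never hold together, over all values of the background context $(u_i(t),\{x_k'(t)\}_{k\neq j})$ that can arise.

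\emph{Sufficiency.} Suppose $\Theta_{i,j}\in[0,1]$. If $u_i(t)=0$, both outcomes are $0$. If $u_i(t)=1$ and $Y_0=1$, then $S\ge 0$, so $S+\Theta_{i,j}\ge S\ge 0$ and hence $Y_1=1$. The product in Definition~\ref{def:mono} therefore vanishes in every context. This direction is a one-line monotonicity-of-the-threshold argument.

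\emph{Necessity.} We argue by contrapositive. Suppose $\Theta_{i,j}<0$; we must exhibit a context in which $Y_0=1$ and $Y_1=0$. Take $u_i(t)=1$ and choose $x_k'(t)=0$ for all $k\in\pa_i\setminus\{j\}$, so that $S=0$. Then $Y_0=\mathbb{I}\{0\ge 0\}=1$ and $Y_1=\mathbb{I}\{\Theta_{i,j}\ge 0\}=0$, which violates monotonicity. The inhibitory parent alone flips the outcome from occurrence to non-occurrence.

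\emph{Main obstacle.} The delicate step is justifying that this witnessing context actually occurs. We need the event $\{u_i(t)=1,\ x_k'(t)=0\ \forall k\in\pa_i\setminus\{j\}\}$ to be attainable, ideally with positive probability. For this we would invoke the Poisson trigger structure. There is a trigger point of $\Phi_i$ at some $t$, and since the parents' activity in the window $(t-\bar\tau,t)$ is driven by independent PPPs, it has positive probability of containing no events at all. For instance, a PPP has probability $e^{-\lambda_k\bar\tau}>0$ of having no points in the window, and the parents cannot be active without being triggered. We will also note the convention that monotonicity in Definition~\ref{def:mono} is interpreted as holding for every unit (context) of the model, so a single feasible context suffices to break it. The case $i=j$ needs a brief remark: there $X_i'(t)$ is a distinct node from $X_i(t)$, so the same computation applies verbatim.
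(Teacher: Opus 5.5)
Your proof is correct and follows essentially the same route as the paper. The paper requires the product of indicators to vanish for every active-parent subset $\gC_i \subseteq \pa_i\setminus\{j\}$, which matches your quantification over contexts. Its Case~1 with $\gC_i=\emptyset$ is exactly your $S=0$ necessity witness, and its intersection of the intervals $\bigl[-\sum_{k\in\gC_i}\Theta_{i,k},\,1\bigr]$ is your threshold-monotonicity sufficiency step. Your Poisson void-probability argument that the witnessing context actually arises is extra care the paper omits, since it simply takes every configuration of $\gC_i$ as admissible.
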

\begin{proof}
    From def.~\ref{def:mono}, we can say that $X_i(t)$ is monotonic relative to $X_j'(t)$ if and only if
    \begin{align*}
        \mathbb{I}\!\left\{ X_i(t) = 0 \mid do\!\left(X_j'(t)=1 \right) \right\}
        \mathbb{I}\!\left\{ X_i(t) = 1 \mid do\!\left(X_j'(t)=0 \right) \right\} = 0.
    \end{align*}
    The above equation must hold for all subsets $\gC_i \subseteq \pa_i \setminus \{j\}$, i.e.,
    \begin{align}
        \mathbb{I}\!\left\{ \Theta_{i,j} + \sum_{k \in \gC_i} \Theta_{i,k} < 0 \right\}
        \mathbb{I}\!\left\{ \sum_{k \in \gC_i} \Theta_{i,k} \ge 0 \right\}
        = 0, \quad \forall \gC_i \subseteq \pa_i \setminus \{j\}.
        \label{monotonic}
    \end{align}

    \textit{Case 1:} $\sum_{k \in \gC_i} \Theta_{i,k} \ge 0$.  
    To ensure that \eqref{monotonic} holds, we require
    \begin{align*}
        \Theta_{i,j} + \sum_{k \in \gC_i} \Theta_{i,k} \ge 0.
    \end{align*}
    Taking the intersection over all such $\gC_i$, we obtain
    \begin{align*}
        \Theta_{i,j}
        &\in \bigcap_{\gC_i : \sum_{k \in \gC_i} \Theta_{i,k} \ge 0}
        \left[ -\sum_{k \in \gC_i} \Theta_{i,k},\, 1 \right] \\
        &= \left[
            \max\!\left\{
                -\sum_{k \in \gC_i} \Theta_{i,k}
                : \sum_{k \in \gC_i} \Theta_{i,k} \ge 0,\,
                \forall \gC_i \subseteq \pa_i \setminus \{j\}
            \right\},
            1
        \right].
    \end{align*}
    For $\gC_i = \emptyset$ we have $\sum_{k \in \gC_i} \Theta_{i,k} = 0$, yielding
    \begin{align*}
        \Theta_{i,j} \in [0,1].
    \end{align*}

    \textit{Case 2:} $\sum_{k \in \gC_i} \Theta_{i,k} < 0$.  
    In this case, $\mathbb{I}\left\{\sum_{k \in \gC_i} \Theta_{i,k} \ge 0 \right\} = 0$, so
    \eqref{monotonic} is automatically satisfied for all $\Theta_{i,j}$.  
    Hence, these subsets impose no additional restriction beyond the assumed domain
    $\Theta_{i,j} \in [-1,1]$.

    Intersecting the results from both cases gives $\Theta_{i,j} \in [0,1]$, which completes the proof.
\end{proof}

\begin{corollary}
    If for some $i \in [n]$, $\Theta_{i,j} \in [0,1], \forall j \in \pa_i$ then the variable $X_i(t)$ is monotonic relative to all $X_j'(t), \forall j \in \pa_i$. However, the structural equation is reduced to $x_i(t) = u_i(t)$ and the causal impact of $X_j'(t)$ on $X_i(t)$ vanishes\footnote{To rule out trivial feasibility, we verify that $\lvert S \rvert \neq \lvert \Phi \rvert$ for all datasets (equality $\lvert S \rvert = \lvert \Phi \rvert$ would indicate degeneracy); the verification scripts will be included in the public code release.} $\forall j \in \pa_i$.
    \label{cor:vanish}
\end{corollary}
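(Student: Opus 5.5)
The corollary has two claims: monotonicity of $X_i(t)$ relative to every structural parent, and collapse of the structural equation \eqref{sem} to $x_i(t)=u_i(t)$. The first claim follows by applying Proposition~\ref{prop:mono} to each parent separately. The second is a short sign argument on the sum inside the indicator.

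\emph{Monotonicity.} Fix $i$ with $\Theta_{i,j}\in[0,1]$ for all $j\in\pa_i$. For any single $j\in\pa_i$, the hypothesis $\Theta_{i,j}\in[0,1]$ is exactly the condition of Proposition~\ref{prop:mono}. So $X_i(t)$ is monotonic relative to $X_j'(t)$. Since $j$ was arbitrary, this holds for every $j\in\pa_i$.

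\emph{Collapse of the SEM.} By \eqref{sem}, $x_i(t)=u_i(t)\cdot\mathbb{I}\{\sum_{j\in\pa_i(t)}\Theta_{i,j}\ge 0\}$, with $\pa_i(t)\subseteq\pa_i$. Each summand is nonnegative by hypothesis. Hence the sum is nonnegative for every active parent set $\pa_i(t)$, and in particular for every realisation of $(x_j'(t))_{j\in\pa_i}$. The empty set is included, in which case the sum is $0$ and $\mathbb{I}\{0\ge0\}=1$. The indicator is therefore identically $1$, so $x_i(t)=u_i(t)$ for all $t$.

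\emph{Vanishing causal impact.} The resulting equation does not depend on any $x_j'(t)$. Consequently $X_i(t)\mid do(X_j'(t)=1)$ and $X_i(t)\mid do(X_j'(t)=0)$ coincide for every $j\in\pa_i$, so the causal effect of each $X_j'(t)$ on $X_i(t)$ vanishes.

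No step here is genuinely hard. The only point needing care is that the equality $x_i(t)=u_i(t)$ must hold for every realisation of the past indicators, not just a typical one. This is handled by the observation that a sum of nonnegative terms over any subset, including the empty one, is nonnegative, combined with the $\ge 0$ (rather than $>0$) threshold in \eqref{sem}. The footnote's empirical check $|\gS|\neq|\Phi|$ is not part of the proof. It only certifies that generated datasets avoid this degenerate regime, since $x_i=u_i$ for all $i$ would give $\gS=\Phi$.
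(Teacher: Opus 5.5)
Your proof is correct and follows the paper's intended route. The paper gives no separate proof: it treats the corollary as immediate from Proposition~\ref{prop:mono} applied to each $j\in\pa_i$, plus the observation that nonnegative $\Theta_{i,j}$ make the sum in \eqref{sem} nonnegative for every active parent set (including $\emptyset$), so the indicator is identically $1$ and $x_i(t)=u_i(t)$.
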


\begin{proposition}
    $X_j'(t)$ is \underline{exogenous} relative to $X_i(t) \, \forall i,j \in [n]$.
    \label{prop:exo}
\end{proposition}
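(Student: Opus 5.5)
The argument works directly from Definition~\ref{def:exo}: $X_j'(t)$ is exogenous relative to $X_i(t)$ exactly when the two have no common ancestor in the causal graph $\graph$. So the plan is to determine the ancestor sets of both nodes inside $\graph$ and show they are disjoint. As in the proof of Proposition~\ref{prop:semimarkov}, I take $\graph \subset \{\gU \times \gV\} \cup \{\gV' \times \gV\}$. That is, every edge in $\graph$ points from a trigger node $U_k(t)$ or a past-event indicator $X_k'(t)$ into an event node $X_l(t)$. No edge points into $\gU \cup \gV'$, and no edge leaves $\gV$.

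\textbf{Step 1: ancestors of $X_j'(t)$.} The first step is to note that $X_j'(t) \in \gV'$ has in-degree zero in $\graph$. This is the same structural fact used in Proposition~\ref{prop:semimarkov}: starting from any node, no node of $\gU$ or $\gV'$ can be reached. Hence the ancestor set of $X_j'(t)$ is empty, or is $\{X_j'(t)\}$ alone under the convention that counts a node among its own ancestors.

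\textbf{Step 2: ancestors of $X_i(t)$.} Every directed path into $X_i(t)$ has length one, because nodes in $\gV$ have no outgoing edges. Therefore the ancestors of $X_i(t)$ are exactly its direct parents, $\{U_i(t)\} \cup \{X_k'(t) : k \in \pa_i\}$, possibly together with $X_i(t)$ itself.

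\textbf{Step 3: disjointness.} A common ancestor would have to lie in both sets. Under the strict convention (a node is not its own ancestor) the intersection is trivially empty, because $X_j'(t)$ has no ancestors at all. The only delicate case is the reflexive convention with $j \in \pa_i$. There $X_j'(t)$ is itself a parent of $X_i(t)$, and this is a direct causal link rather than a common cause. I will resolve this as Pearl's notion does, reading ``common ancestor'' as a node $Z \notin \{X, Y\}$ with directed paths to both $X$ and $Y$. Since $X_j'(t)$ has no proper ancestors, no such $Z$ exists, and the claim holds for all $i,j \in [n]$, including $i = j$.

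\textbf{Main obstacle.} The hard part is not the graph argument but justifying that $\graph$ contains no edges into $\gV'$. Semantically, $X_j'(t)$ is determined by earlier events $X_j(\bar t)$ with $\bar t \in (t-\bar\tau, t)$, and those in turn depend on earlier triggers. This is visible in the proof of Proposition~\ref{prop:nonmarkov}, which treats $\gV'$ as background variables. I will state explicitly that, consistent with the model specification and with Proposition~\ref{prop:nonmarkov}, the graph $\graph$ at time $t$ is the bipartite graph in which $\gU \cup \gV'$ are background (root) nodes. Their generative history before time $t$ is outside the model at time $t$ and contributes no nodes to $\graph$. With that convention fixed, Steps 1--3 complete the proof.
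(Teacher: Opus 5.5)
Your argument is correct and is essentially the paper's own proof: the paper also observes that the edges of $\graph$ run only from $\gV'$ (and $\gU$) into $\gV$, so $X_j'(t)$ has no ancestors and therefore cannot share one with $X_i(t)$. Your explicit treatment of the reflexive-ancestor convention (the case $j \in \pa_i$) and of $\gU \cup \gV'$ as root nodes just spells out what the paper's two-line proof takes for granted, and you are right that the result holds only relative to this bipartite $\graph$, since the dependence among the $X_k'(t)$ shown in Proposition~\ref{prop:nonmarkov} is not represented as ancestry there.
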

\begin{proof}
    Firstly, for all $\rmA \in \{0,1\}^{n \times n}$ the subgraph with edges $\gV' \times \gV$ is a unidirectional bipartite graph. Secondly, $X_j'(t) \in \gV'$ has no ancestors. Therefore, $X_i(t)$  and $X_j'(t)$ have no common ancestor $\forall i,j \in [n]$ which concludes the proof (see def.~\ref{def:exo}).
\end{proof}

\begin{theorem}[Identifiability]
\label{thm:identify}
    If for some $j \in \pa_i$ $X_j'(t)$  exogenous relative to $X_i(t)$, and $X_i(t)$ is monotonic relative to $X_j'(t)$, then the probability of necessity and sufficiency $(\mathsf{PNS})$ is \underline{identifiable}, and determined through:
    \begin{align*}
        \mathsf{PNS}_{i,j} = \mathbb{P}\left(X_i = 1 \mid X_j'= 1 \right) -  \mathbb{P}\left(X_i = 1 \mid X_j'= 0 \right).
    \end{align*}
\end{theorem}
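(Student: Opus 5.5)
This is Pearl's identification result for PNS (Causality, \textsection~9.2, Theorem~9.2.14), and the plan is to instantiate it with $X = X_j'(t)$ and $Y = X_i(t)$. I will write $y_x$ for the potential outcome $X_i(t)$ under $do(X_j'(t)=x)$.

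The first step recalls the definition
\begin{align*}
\mathsf{PNS}_{i,j} = \mathbb{P}\left(y_1 = 1, y_0 = 0\right).
\end{align*}
The response types of $X_i(t)$ with respect to $X_j'(t)$ form four events: $\{y_1=1,y_0=0\}$, $\{y_1=1,y_0=1\}$, $\{y_1=0,y_0=1\}$ and $\{y_1=0,y_0=0\}$. This gives the decomposition
\begin{align*}
\mathbb{P}(y_1=1) - \mathbb{P}(y_0=1) = \mathbb{P}(y_1=1,y_0=0) - \mathbb{P}(y_1=0,y_0=1).
\end{align*}

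The second step uses monotonicity. By Definition~\ref{def:mono}, which Proposition~\ref{prop:mono} characterises in our SEM by $\Theta_{i,j}\in[0,1]$, the event $\{y_1=0, y_0=1\}$ is impossible for every realisation of the remaining background variables. These are the trigger $U_i(t)$ and the other past indicators $X_k'(t)$ for $k \in \pa_i\setminus\{j\}$.

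This needs care: Definition~\ref{def:mono} is stated pointwise, so I will argue that the indicator product vanishes for each fixed configuration of $(U_i(t), \{X_k'(t)\}_{k\neq j})$. For $U_i(t)=0$ this is trivial, since then $x_i(t)=0$. For $U_i(t)=1$, the case $\Theta_{i,j}\ge 0$ gives exactly inequality \eqref{monotonic} with $\gC_i$ taken to be the active set. It follows that the inhibitory response type has probability zero, and hence $\mathsf{PNS}_{i,j} = \mathbb{P}(y_1=1)-\mathbb{P}(y_0=1)$.

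The third step uses exogeneity to replace interventional quantities by observational ones. By Proposition~\ref{prop:exo}, $X_j'(t)$ has no common ancestor with $X_i(t)$ in $\graph$. This implies $\mathbb{P}(y_x = 1) = \mathbb{P}(X_i=1\mid do(X_j'=x)) = \mathbb{P}(X_i=1\mid X_j'=x)$.

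This is the main obstacle. As the proof of Proposition~\ref{prop:nonmarkov} shows, $X_j'(t)$ can be statistically dependent on the other parents $X_k'(t)$, since they share trigger history. Exogeneity in the graphical sense of Definition~\ref{def:exo} then does not by itself give the independence $\{y_0,y_1\}\indep X_j'(t)$ that Pearl's theorem requires.

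I would handle this as Pearl does, by taking Definition~\ref{def:exo} within the model $\graph$ in which the nodes of $\gV'\cup\gU$ are treated as root background variables with no modelled common cause. The back-door criterion is then satisfied with the empty set, because no arrows point into $X_j'(t)$, and the conditional equals the interventional distribution. If a referee objects to the dependence among the roots, the fallback is to state the exogeneity assumption explicitly as $\{y_0,y_1\}\indep X_j'(t)$, which is the form used in Pearl's Theorem~9.2.14.

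Substituting the third step into the second gives the stated formula.
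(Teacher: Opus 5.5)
Your route is the paper's own: its proof is only a pointer to Pearl \S 9.2.3. Your first two steps spell out what that citation uses, and both are correct: the response-type decomposition, and the pointwise use of $\Theta_{i,j}\ge 0$ to exclude $\{y_1=0,y_0=1\}$ when $u_i(t)=1$.

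The genuine gap is your primary resolution of the exogeneity step. You cannot declare the nodes of $\gV'\cup\gU$ to be roots with no modelled common cause. By (C3) in the proof of Proposition~\ref{prop:nonmarkov}, $X_j'(t)$ and $X_k'(t)$ are in general dependent through shared trigger history. Any diagram that represents the model correctly, whether with a bidirected arc or with the earlier trigger processes unrolled as common ancestors, therefore contains the open back-door path $X_j'(t)\leftrightarrow X_k'(t)\to X_i(t)$. So the empty set does not satisfy the back-door criterion.

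This is not a referee's quibble: the identity actually fails. Take $\pa_i=\{j,k\}$ with $\Theta_{i,j}\in(0,1)$ and $\Theta_{i,k}\in[-1,-\Theta_{i,j})$, and evaluate at trigger times of $i$. Then $y_1=y_0=1-X_k'(t)$, so $\mathsf{PNS}_{i,j}=0$. The right-hand side, however, equals $\mathbb{P}(X_k'=0\mid X_j'=1)-\mathbb{P}(X_k'=0\mid X_j'=0)$. This is nonzero whenever the binary variables $X_j'(t)$ and $X_k'(t)$ are dependent, e.g.\ $\pa_k=\{j\}$ with $\Theta_{k,j}<0$, so that recent $j$-events suppress $k$. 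In this example $X_i(t)$ is monotonic relative to $X_j'(t)$ (Proposition~\ref{prop:mono}), and $X_j'(t)$ is exogenous relative to $X_i(t)$ in the sense of Definition~\ref{def:exo} (Proposition~\ref{prop:exo}). Hence no argument based on Definition~\ref{def:exo} applied to $\graph$ can close this step.

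You therefore have to commit to your fallback: read ``exogenous'' in Pearl's counterfactual sense, $\{y_0,y_1\}\indep X_j'(t)$. That is exactly the hypothesis of the result the paper cites, and hence what its one-line proof silently assumes. In this SEM a checkable sufficient condition is that $X_j'(t)$ be independent of $(X_k'(t))_{k\in\pa_i\setminus\{j\}}$; independence from $U_i(t)$ follows as in (C2). This suffices because $y_0$ and $y_1$ are functions of $U_i(t)$ and those $X_k'(t)$ only. Under that reading your three steps give a complete proof, identical in substance to the paper's. Under the purely graphical reading, the stated formula is false for this model.
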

\begin{proof}
    For detailed proof, please see \citep[\textsection~9.2.3]{pearl2009causality}.
\end{proof}

\end{document}